\newtheorem{prop}{Proposition}
\DeclareMathOperator{\argmaxG}{arg\,max}
\definecolor{cvprblue}{rgb}{0.21,0.49,0.74}
\title{CDMAD: Class-Distribution-Mismatch-Aware Debiasing \\for
Class-Imbalanced Semi-Supervised Learning}
\author{%
  Hyuck Lee \ \ \ \ \ \ \ \ \ ~~~ Heeyoung Kim\\
  Department of Industrial and Systems Engineering, KAIST\\
  Daejeon 34141, Republic of Korea \\
  \texttt{\{dlgur0921, heeyoungkim\}@kaist.ac.kr} \\
  }
\begin{document}
\maketitle
\begin{abstract}
Pseudo-label-based semi-supervised learning (SSL) algorithms trained on a class-imbalanced set face two cascading challenges: 1) Classifiers tend to be biased towards majority classes, and 2) Biased pseudo-labels are used for training. It is difficult to appropriately re-balance the classifiers in SSL because the class distribution of an unlabeled set is often unknown and could be mismatched with that of a labeled set. We propose a novel class-imbalanced SSL algorithm called class-distribution-mismatch-aware debiasing (CDMAD). For each iteration of training, CDMAD first assesses the classifier's biased degree towards each class by calculating the logits on an image without any patterns (e.g., solid color image), which can be considered irrelevant to the training set. CDMAD then refines biased pseudo-labels of the base SSL algorithm by ensuring the classifier's neutrality. 
CDMAD uses these refined pseudo-labels during the training of the base SSL algorithm to improve the quality of the representations. In the test phase, CDMAD similarly refines biased class predictions on test samples. CDMAD can be seen as an extension of post-hoc logit adjustment to address a challenge of incorporating the unknown class distribution of the unlabeled set for re-balancing the biased classifier under class distribution mismatch. CDMAD ensures Fisher consistency for the balanced error. Extensive experiments verify the effectiveness of CDMAD.
\end{abstract}    
\vspace{-0.2in}
\section{Introduction}
\label{intro}
Classifiers trained on a class-imbalanced set suffer from being biased toward the majority classes. 
Under semi-supervised learning (SSL) settings, classifiers of pseudo-label-based algorithms tend to be further biased because of the use of biased pseudo-labels for training. The use of biased pseudo-labels also decreases the quality of representations. 
This problem becomes more serious when the class distributions of the labeled and unlabeled sets differ significantly. In fact, recent SSL algorithms, such as ReMixMatch \cite{berthelot2019remixmatch} and CoMatch \cite{li2021comatch}, rely on the assumption that the class distribution of the unlabeled set is the same as that of the labeled set and cannot consider a potential class distribution mismatch between the labeled and unlabeled sets.

Recently, many class imbalanced SSL (CISSL) algorithms \cite{wei2021crest,lee2021abc,fan2022CoSSL,NEURIPS2020_a7968b43,oh2022daso,lai2022smoothed} have been proposed. However, \citet{wei2021crest,lee2021abc,fan2022CoSSL} assumed that the class distribution of the unlabeled set is known and the same as that of the labeled set, although the class distribution of the unlabeled set can be unknown in practice (e.g., STL-10 \cite{coates2011analysis}) and training sets comprising labeled and unlabeled sets collected from different periods are likely to have a class distribution mismatch. 
\citet{NEURIPS2020_a7968b43,oh2022daso,lai2022smoothed} did not make an assumption of the same class distributions for labeled and unlabeled sets in the main training stage.
However, after the main training stage, they additionally used the re-balancing technique of Classifier Re-training (cRT) \cite{kang2019decoupling} or post-hoc logit-adjustment (LA) \cite{menon2020long}, which were proposed for fully supervised class-imbalanced learning. When using cRT for CISSL, there are disadvantages that the classifier cannot be learned interactively with representations, and only the labeled set is used for training the classifier \cite{lee2021abc}. Using LA for CISSL may not re-balance the classifier to an appropriate degree when the class distribution of the unlabeled set is unknown and differs from that of the labeled set, because LA can not consider the unknown class distribution of the unlabeled set. 

We propose a CISSL algorithm, class-distribution-mismatch-aware debiasing (CDMAD), which effectively mitigates class imbalance in SSL even under severe class distribution mismatch between labeled and unlabeled sets. The key idea of CDMAD is to consider the classifier's biased degree towards each class for refining both the biased pseudo-labels of the base SSL algorithm and class predictions on test samples. 
To measure the classifier's biased degree, we utilize the class prediction on an input that is reasonably assumed to be irrelevant to the training set.  

\begin{figure}
	\begin{center}
		\begin{tabular}{cc}
			 \hspace{-0.175in}\includegraphics[width=3.6cm, height=3.6cm]{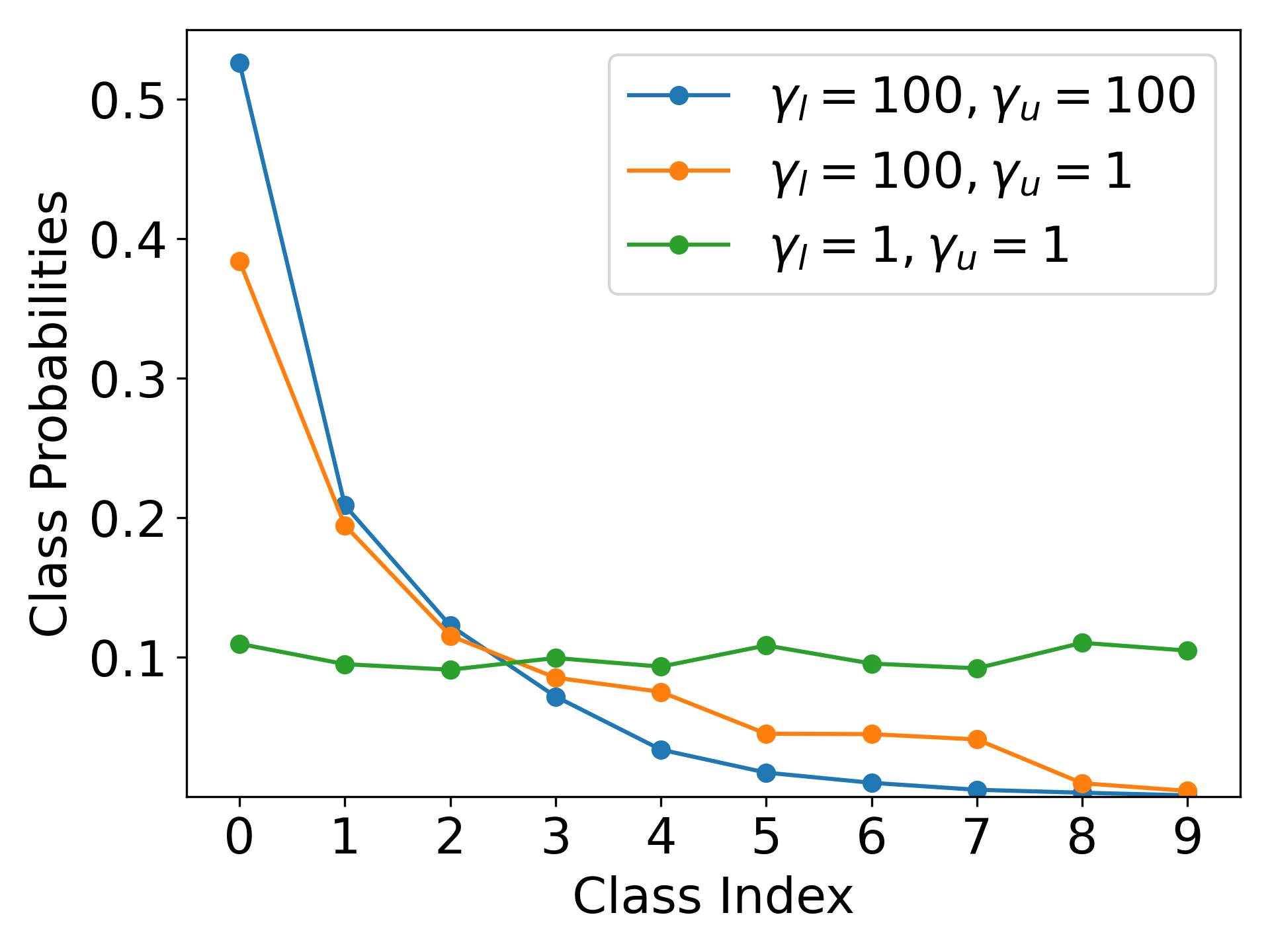}& \includegraphics[width=3.6cm, height=3.6cm]{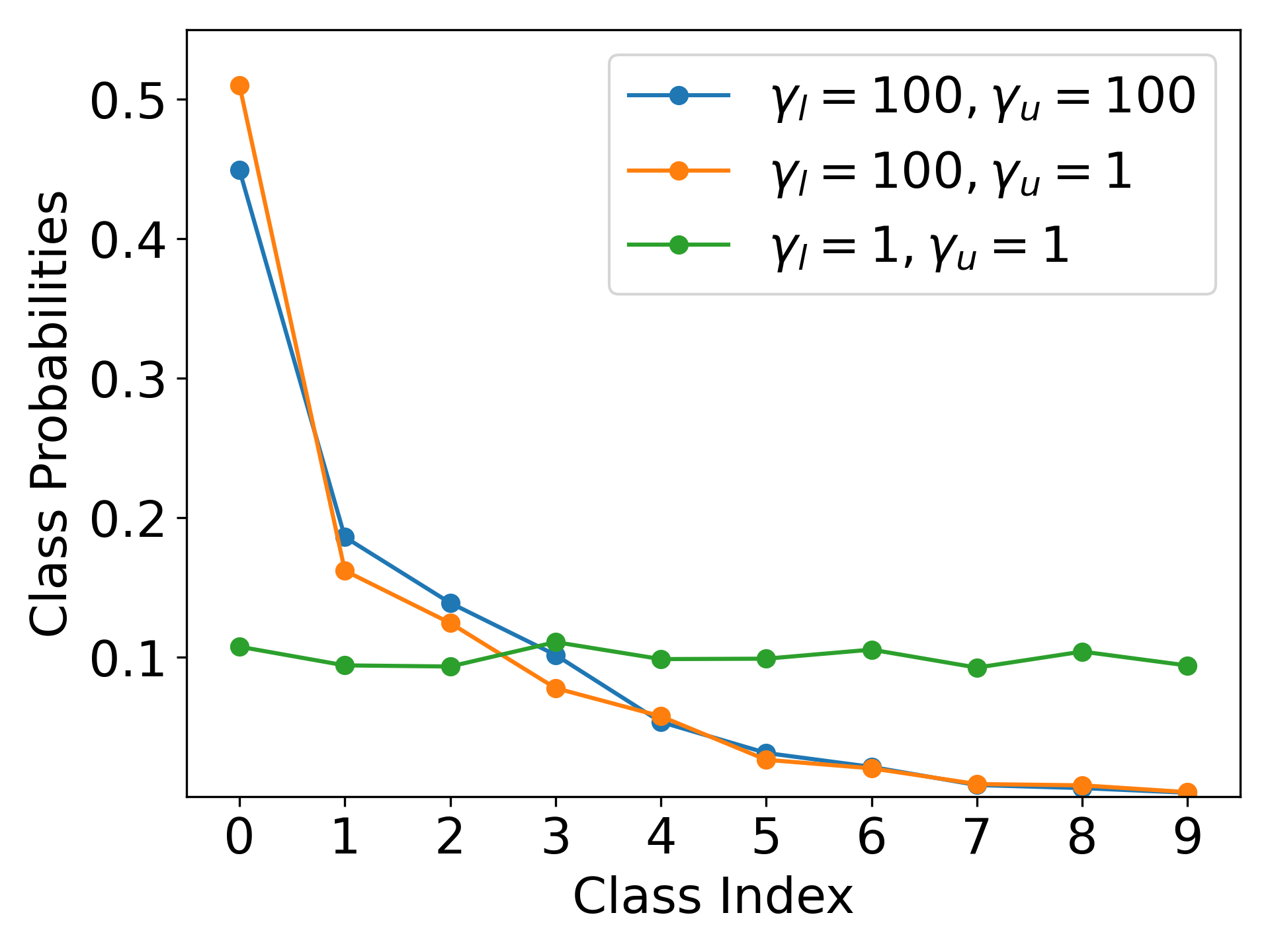} \\\ 
			  \footnotesize{(a) FixMatch}  &\hspace{-0.15in}
			\footnotesize{(b) ReMixMatch}
		\end{tabular}
	\end{center}
 \vspace{-0.2in}
	\caption{Class probabilities on an image without any patterns.}
	\label{prior}
 \vspace{-0.25in}
\end{figure}

In general, a trained classifier predicts a class of a new sample based on the learned features. Therefore, for an image irrelevant to the learned features, the predicted class probabilities are expected to be uniform across classes. However, this may not be true when the training set is class-imbalanced because the classifier tends to be biased towards the majority classes. \cref{prior} illustrates the class probabilities predicted on an image without any patterns (white image) using FixMatch \cite{sohn2020fixmatch} and ReMixMatch \cite{berthelot2019remixmatch}, base SSL algorithms of the recent CISSL studies, trained on CIFAR-10 under $\gamma_l=\gamma_u=1$ (class-balanced set), $\gamma_l=100$ and $\gamma_u=1$ (class-imbalanced set
), and $\gamma_l=\gamma_u=100$ (class-imbalanced set), where $\gamma_l$ and $\gamma_u$ denote the class imbalanced ratios for the labeled and unlabeled sets (formally defined in \cref{setup}), respectively. The classifiers trained on the class-imbalanced sets produced highly nonuniform class probabilities for the white image, whereas they produced nearly uniform class probabilities for the same input when trained on the class-balanced set. Here, it may be reasonable to assume that the solid color image does not have the features learned from the training set. Then, the class probabilities for the solid color image can be thought of as predicted based solely on the classifier's biased degree towards each class, regardless of the learned features. 

Motivated by the above finding, CDMAD measures the classifier's biased degree by calculating logits on a solid color image for each iteration of training. Then, CDMAD refines the biased pseudo-labels of a base SSL algorithm by adjusting for the measured bias of the classifier in the logits for unlabeled samples. The refined pseudo-labels are used to train the base SSL algorithm, which leads to the mitigation of class imbalance and improved quality of the representations. After training is completed, CDMAD similarly refines the biased class predictions on test samples by adjusting for the measured bias of the classifier in the logits for test samples. CDMAD can appropriately re-balance the classifier even under severe class distribution mismatch between labeled and unlabeled sets because the class distributions of both labeled and unlabeled sets can be implicitly considered when measuring the classifier's biased degree. In \cref{LA}, we analyze that CDMAD can be viewed as an extension of LA, incorporating awareness of class distribution mismatch. Similar to LA, CDMAD is Fisher consistent for minimizing the balanced error \citep{menon2013statistical}.

Experimental results on four benchmark datasets verify that CDMAD outperforms baseline CISSL algorithms in both scenarios where the class distributions of the labeled and unlabeled sets either match or mismatch. Furthermore, through qualitative analysis and an ablation study, we demonstrate the effectiveness of each component of CDMAD. Unlike previous CISSL studies, CDMAD does not require additional parameters or training stages in comparison to the base SSL algorithm. Additionally, it can be implemented by simply adding a few lines of code into the existing code of the base SSL algorithms as presented in Appendix A. The code for the CDMAD is available at https://github.com/LeeHyuck/CDMAD.
\vspace{-0.05in}
\section{Related Works}
CReST \citep{wei2021crest} uses unlabeled samples predicted as the minority classes more frequently than those predicted as the majority classes for iterative self-training. ABC \citep{lee2021abc} and CoSSL \citep{fan2022CoSSL} use an auxiliary classifier and train the classifier to be balanced. CoSSL generates pseudo-labels for base SSL algorithms using the balanced classifier. These studies assume that the class distribution of the unlabeled set is known and same as that of the labeled set. DARP \citep{NEURIPS2020_a7968b43} and DASO \citep{oh2022daso} refine biased pseudo-labels by iteratively solving a convex optimization problem and blending semantic pseudo-labels and linear pseudo-labels, respectively. SAW \citep{lai2022smoothed} mitigates class imbalance using smoothed reweighting based on the number of pseudo-labels belonging to each class. 
These studies additionally use CIL techniques, such as cRT \citep{kang2019decoupling} and LA \citep{menon2020long}, after the main training stage.
 Adsh \citep{guo2022class} and InPL \citep{yu2023inpl} use pseudo-labels based on class-dependent confidence thresholds and energy score threshold, respectively. DebiasPL \citep{wang2022debiased} debiases pseudo-labels by mitigating the classifier response bias based on counterfactual reasoning. UDAL \citep{lazarow2023unifying} unifies distribution alignment technique \citep{berthelot2019remixmatch} and logit-adjusted loss \citep{menon2020long} to progressively mitigate class-imbalance. L2AC \citep{wang2023imbalanced} trains a bias adaptive classifier composed of a bias attractor and a linear classifier with bi-level optimization. ACR \cite{wei2023towards} dynamically refines pseudo-labels using an adaptive consistency regularizer that estimates the true class distribution of unlabeled set.
\definecolor{Gray}{gray}{0.9}
\vspace{-0.075in}
\section{Methodology}
\label{method}
\vspace{-0.025in}
\subsection{Problem setup}
\label{setup}

Suppose that we have a training set with labeled set $\mathcal{X}=\left\lbrace\left(x_{n},y_{n}\right):  n\in\left(1,\ldots,N\right)\right\rbrace$ and unlabeled set $\mathcal{U}= \left\lbrace\left(u_{m}\right):  m\in\left(1,\ldots, M\right)\right\rbrace$, where $x_{n}\in\mathbb{R}^{d}$ and $y_{n}\in [C] =\left\lbrace1,\ldots, C\right\rbrace$ denote the $n$th labeled sample and corresponding label, respectively, and $u_{m}\in\mathbb{R}^{d}$ denotes the $m$th unlabeled sample. 
We denote the number of labeled and unlabeled samples of class $c$ as $N_{c}$ and $M_{c}$, respectively, i.e., $\sum_{c=1}^{C}N_{c}=N$ and $\sum_{c=1}^{C}M_{c}=M$, where $M_{c}$ is challenging to know in a realistic scenario. The $C$ classes are sorted in descending order according to the cardinality of labeled samples, i.e., $N_{1}\geq\cdots\geq N_{C}$. The ratio of the class imbalance of labeled and unlabeled sets are $\gamma_l = \frac{N_{1}}{N_{C}}$ and $\gamma_u = \frac{M_{1}}{M_{C}}$, respectively, where $\gamma_l\gg 1$ or $\gamma_u\gg 1$ in the class-imbalanced training set. When $M_{c}$ is unknown, $\gamma_u$ will be also unknown and can differ from $\gamma_l$. That is, the class distribution of the unlabeled set can be mismatched with that of the labeled set. 
For each iteration of training, we sample minibatches $\mathcal{MX} = \left\lbrace\left(x^m_{b},y^m_{b}\right):  b\in \left(1,\ldots, B\right)\right\rbrace \subset \mathcal{X}$ and $\mathcal{MU}= \left\lbrace\left(u^m_{b}\right):  b\in\left(1,\ldots, \mu B\right)\right\rbrace \subset \mathcal{U}$ from the training set, where $B$ denotes the minibatch size and $\mu$ denotes the relative size of $\mathcal{MU}$ to $\mathcal{MX}$. Using $\mathcal{MX}$ and $\mathcal{MU}$ for training, we aim to learn a classifier $f_{\theta}:\mathbb{R}^{d}\rightarrow\left\lbrace1,\ldots,C\right\rbrace$ that effectively classifies samples in a test set $\mathcal{X}^{test} = \left\lbrace\left(x^{test}_{k},y^{test}_{k}\right):  k\in\left(1,\ldots,K\right)\right\rbrace$, where $\theta$ denotes parameters of base SSL algorithm. We denote the output logits of $f_{\theta}$ on an input as $g_{\theta}\left(\cdot\right)\in\mathbb{R}^{C}$, i.e, $f_{\theta}\left(\cdot\right)=\argmaxG_cg_{\theta}\left(\cdot\right)_c$, where $\left(\cdot\right)_c$ denotes the $c$th element. 

\subsection{Base SSL algorithms}
\label{fixremix}

The proposed algorithm uses FixMatch \citep{sohn2020fixmatch} or ReMixMatch \citep{berthelot2019remixmatch} as its base SSL algorithm, following other CISSL studies. FixMatch and ReMixMatch use hard or sharpened pseudo-labels for entropy minimization and strong data augmentation techniques \citep{devries2017improved,cubuk2020randaugment} for consistency regularization. Specifically, FixMatch first predicts the class probability of weakly augmented unlabeled data point $\alpha\left(u^m_b\right)$ as $q_b=P_{\theta}\left(y|\alpha\left(u^m_b\right)\right)$ and then generates hard pseudo-label $\hat{q_b}$=$\argmaxG_c\left(q_{b,c}\right)$, where $P_{\theta}\left(y|\cdot\right)=\phi\left(g_{\theta}\left(\cdot\right)\right)$ for softmax activation function $\phi$. 
For consistency regularization, FixMatch uses hard pseudo-label $\hat{q_b}$ only when $\max_c\left(q_{b,c}\right)\geq\tau$, where $\tau$ denotes a predefined confidence threshold, to improve the quality of the pseudo-labels used for training.

ReMixMatch similarly produces $q_b$ and aligns the distribution of $q_b$ to the class distribution of the labeled set $P_{l}\left(y\right)$ as $\Tilde{q_b}=Normalize\left(q_b\times P_{l}\left(y\right)/q\left(y\right)\right)$, where $Normalize\left(x\right)_i=x_i/\sum_j x_j$ and $q\left(y\right)$ denotes the moving average of the class probabilities predicted over the last 128 unlabeled minibatches. 
Then, ReMixMatch sharpens the pseudo-label as $\Bar{q_b}=Normalize\left(\Tilde{q_b}^{1/T}\right)$, where $1/T$ is the sharpening temperature, $0<T<1$. 
With the sharpened pseudo-label $\Bar{q_b}$, ReMixMatch conducts consistency regularization by encouraging the class prediction on $\mathcal{A}\left(u^m_b\right)$ to be consistent with $\Bar{q_b}$. ReMixMatch also conducts Mixup regularization and 
self-supervised learning by rotating unlabeled samples \citep{komodakis2018unsupervised}. Data augmentation techniques $\alpha\left(\cdot\right)$ and $\mathcal{A}\left(\cdot\right)$ are described in detail in Appendix C. 
We express the training losses of FixMatch $loss_{F}$ and ReMixMatch $loss_{R}$ on $\mathcal{MX}$ and $\mathcal{MU}$ as: 
\begin{equation}
\begin{aligned}
\label{eqfix}
	loss_{F}\left(\mathcal{MX},\mathcal{MU},\hat{q},\tau;\theta\right),
\end{aligned}
\end{equation}
\vspace{-0.1in}
\begin{equation}
\begin{aligned}
\label{eqremix}
    loss_{R}\left(\mathcal{MX},\mathcal{MU},\Bar{q};\theta\right),
\end{aligned}
\end{equation}
where $\hat{q}$ and $\bar{q}$ are concatenations of $\hat{q_b}$ and $\bar{q_b}$, $b=1,\ldots,$ $\mu B$, 
respectively. The losses are detailed in Appendix D.

The proposed algorithm uses FixMatch or ReMixMatch as its base SSL algorithm with some modifications as follows: 
\textbf{1)} The proposed algorithm does not use hard or sharpened pseudo-labels because entropy minimization of class predictions may cause the classifier to be biased towards certain classes \citep{lee2021abc}.
 \textbf{2)} The proposed algorithm does not use confidence threshold $\tau$ for FixMatch, enabling the utilization of all unlabeled samples.
 A potential limitation of utilizing inaccurate pseudo-labels can be alleviated by refining them, as discussed in \cref{proposed}.
  \textbf{3)} The proposed algorithm does not employ the distribution alignment technique for ReMixMatch when the class distribution of the unlabeled set is unknown. This is because the labeled and unlabeled sets can potentially have different class distributions while the distribution alignment technique aligns the distribution of pseudo-labels with the class distribution of the labeled set.
 This modification helps prevent the generation of low-quality pseudo-labels in situations where there is a severe class distribution mismatch between the labeled and unlabeled sets, as discussed in \cref{result}. 

\subsection{CDMAD}
\label{proposed}

\textbf{Refinement of pseudo-labels during training}



To refine a pseudo-label $q_b$ generated by FixMatch or ReMixMatch, CDMAD first calculates the logits on a weakly augmented unlabeled sample, $g_{\theta}\left(\alpha\left(u_b^{m}\right)\right)$, 
and the logits on 
an image without any patterns $g_{\theta}\left(\mathcal{I}\right)$, where $\mathcal{I}$ denotes 
an image without any patterns (solid color image). The logits on a solid color image $g_{\theta}\left(I\right)$ is considered the classifier's biased degree towards each class regardless of the learned features, as discussed in \cref{intro}.
Then, CDMAD adjusts for the classifier's biased degree, $g_{\theta}\left(\mathcal{I}\right)$, in the logits  $g_{\theta}\left(\alpha\left(u_b^{m}\right)\right)$, by simple subtraction as follows:
\begin{equation}
\label{eq1}	
g_{\theta}^{*}\left(\alpha\left(u_b^{m}\right)\right)=g_{\theta}\left(\alpha\left(u_b^{m}\right)\right)-g_{\theta}\left(\mathcal{I}\right),
\end{equation}
where $g_{\theta}^{*}\left(\cdot\right)$ denotes the refined logits, 
which are considered to be calculated based only on the learned features. With $g_{\theta}^{*}\left(\alpha\left(u_b^{m}\right)\right)$, the refined pseudo-label $q_b^*$ is obtained as:
\begin{equation}
\label{eq22}
q_b^*=\phi\left(g_{\theta}^{*}\left(\alpha\left(u_b^{m}\right)\right)\right).
\end{equation}

As noted in \cref{fixremix}, CDMAD does not use the distribution alignment technique for ReMixMatch. Instead, CDMAD adds the supervised loss for weakly augmented labeled sample $Sup(\mathcal{MX};\theta)$ into the training loss of ReMixMatch to enhance the classifier's familiarity with labeled samples. 
This can effectively improve the quality of pseudo-labels when the class distributions of labeled and unlabeled sets mismatch, as discussed in \cref{result}.
The training losses for FixMatch and ReMixMatch with CDMAD, denoted by $loss_{F}^*$ and $loss_{R}^*$, respectively, are expressed as:
\vspace{-0.05in}
\begin{equation}
\label{eq2}	loss_{F}^*=loss_{F}\left(\mathcal{MX},\mathcal{MU},q^*,0;\theta\right),
\end{equation}
\vspace{-0.2in}
\begin{equation}
\label{eq3}
loss_{R}^*=loss_{R}\left(\mathcal{MX},\mathcal{MU},q^*;\theta\right)+Sup(\mathcal{MX};\theta),
\end{equation}
where $loss_{F}$ and $loss_{R}$ are from \cref{eqfix} and \cref{eqremix}, and $q^*$ is the concatenation of the $q^*_b$. 
\cref{fig:architecture} illustrates the pseudo-label refinement process. By using the refined pseudo-labels during the training of the base SSL algorithm, the quality of representations is improved
.

\begin{figure}[t]

 \begin{center}
        \hspace{-0.1in}\includegraphics[width=3.45in,height=2.3in]
        {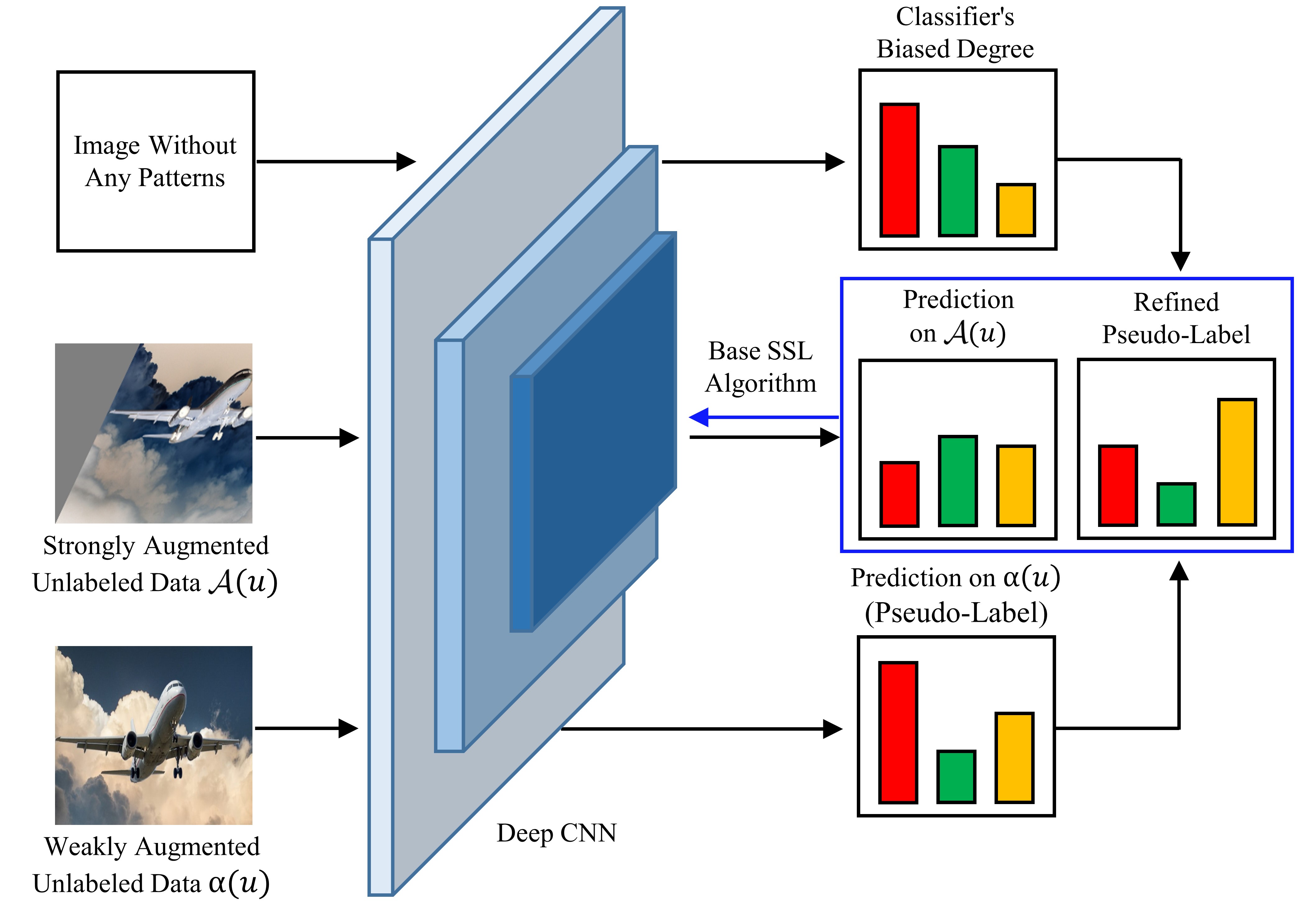}

	\end{center}
 \vspace{-0.1in}
\caption{
 Pseudo-label refinement process using CDMAD. 
 }
	\label{fig:architecture}
\vspace{-0.2in}
\end{figure}

CDMAD can effectively refine the pseudo-labels even under severe class distribution mismatch between the labeled and unlabeled sets because classifier's biased degree $g_{\theta}\left(\mathcal{I}\right)$ is affected by the class distributions of both sets.
Unlike the previous CISSL studies, CDMAD does not require additional parameters for an auxiliary classifier or additional training stages. CDMAD can be implemented by adding a few lines of code as presented in Appendix A. 


\textbf{Refinement of biased class predictions during testing}

Even with all biased pseudo-labels perfectly refined during the training process, biased predictions may still be produced for the test samples because the training set is class-imbalanced.
To refine biased class predictions on test samples, CDMAD also adjusts the biased logits on $x^{test}_k$, for $k=1,\ldots,K$. CDMAD first calculates the classifier's biased degree $g_{\theta}\left(\mathcal{I}\right)$. Then, similar to \cref{eq2}, the logits for test samples, $g_{\theta}\left(x^{test}_k\right)$, for $k=1,\ldots,K$, are adjusted as:
\begin{equation}
\label{eq4}
g_{\theta}^{*}\left(x^{test}_k\right)=g_{\theta}\left(x^{test}_k\right)-g_{\theta}\left(\mathcal{I}\right).
\end{equation}
With the adjusted logits $g_{\theta}^{*}\left(x^{test}_k\right)$, the refined class prediction $f_{\theta}^*\left(x^{test}_k\right)$ is obtained as follows: 
\begin{equation}
\begin{aligned}
\label{eqtest}
f_{\theta}^*\left(x^{test}_k\right)=\argmaxG_c g_{\theta}^{*}\left(x^{test}_k\right)_c\\ =\argmaxG_{y\in[C]}P_{\theta}\left(y|x^{test}_k\right)/P_{\theta}\left(y|\mathcal{I}\right).
\end{aligned}
\end{equation}
We illustrate the test process in Appendix E and pseudo code of the proposed algorithm in Appendix F. 

\subsection{CDMAD as a CISSL extension of post-hoc logit-adjustment (LA)}
\label{LA}

CDMAD can be viewed as a CISSL extension of LA \citep{menon2020long} to take into account class distribution mismatch between labeled and unlabeled sets, where LA was originally introduced to re-balance a biased classifier in CIL. To re-balance the classifier, LA post-adjusts the logits on test samples $g_{\theta}\left(x^{test}_k\right)$ by simply subtracting the log of the estimate of the underlying class prior $P(y)$, denoted by $\pi$ (e.g., each class frequency on the training set), as follows:
\begin{equation}
\label{eqla}
g_{\theta}^{*}\left(x^{test}_k\right)=g_{\theta}\left(x^{test}_k\right)-\log \pi.
\end{equation}
The adjustment in \cref{eqla} was proven to be Fisher consistent for minimizing the balanced error rate (BER), 
\vspace{-0.05in}
\begin{equation}
\label{ber}
\mbox{BER}\left(f^*_{\theta}\right)=\frac{1}{C}\sum_{y\in[C]}P_{x|y}\left(y\neq f_{\theta}^*\left(x\right)\right).
\vspace{-0.05in}
\end{equation}
In addition, empirical evidence has demonstrated that LA can effectively enhance classification performance across various class-imbalanced learning scenarios.

However, in CISSL, where the class distribution of the unlabeled set is unknown and may substantially differ from that of the labeled set, LA may result in the classifier being re-balanced to an inappropriate degree, thereby leading to a decrease in classification performance. This limitation arises from the challenge of incorporating the unknown class distribution of the unlabeled set for re-balancing. Specifically, the class prior $P(y)$, which is often approximated as the class distribution of the labeled set $P_l\left(y\right)$, cannot consider the class distribution of the unlabeled set.

By comparing \cref{eq4} and \cref{eqla}, we can observe that CDMAD shares a similar form with LA. Specifically, given that $g_{\theta}\left(\mathcal{I}\right)+constant=\log P_{\theta}\left(y|\mathcal{I}\right)$, CDMAD can be seen as replacing the class frequencies $\pi$ in \cref{eqla} by $P_{\theta}\left(y\right|\mathcal{I})$, which can be considered an estimate of the \textit{classifier's prior} $P_{\theta}(y)$. (With this interpretation, CDMAD can be viewed as allowing the classifier to predict class probabilities based solely on the given input, without being affected by the classifier's prior $P_{\theta}(y)$.) 
This replacement allows CDMAD to implicitly incorporate the class distributions of both labeled and unlabeled sets, facilitating its awareness of class distribution mismatch between the two sets, as we will further discuss in \cref{analysis}. Furthermore, whereas LA is solely employed to refine biased class predictions on the test set, CDMAD is also employed to refine the biased pseudo-labels.
Similar to LA, CDMAD ensures Fisher consistency for the balanced error.

\begin{prop}
Given a solid color image $\mathcal{I}$ independent of class labels $y$, the refinement by CDMAD in \cref{eqtest} is Fisher consistent for minimizing 
the BER in \cref{ber}. 
\end{prop}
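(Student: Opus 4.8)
\emph{Proof proposal.} The plan is to mirror the Fisher-consistency argument for post-hoc logit adjustment \citep{menon2020long,menon2013statistical} and reduce the claim to a one-line computation whose only new ingredient is the independence of $\mathcal{I}$ from $y$. First I would recall the characterization of the Bayes-optimal classifier for the balanced error: writing $P(x\mid y)$ for the class-conditional distributions, $\mathrm{BER}$ in \cref{ber} is (up to a $y$-independent normalizer) the $0/1$ risk under the class-balanced distribution with density $\tfrac1C P(x\mid y)$, and for each $x$ the integrand $\sum_{y}[f(x)\neq y]P(x\mid y)$ is minimized by $f(x)=\argmax_{y\in[C]}P(x\mid y)$. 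So it suffices to show that the rule $f_\theta^*$ in \cref{eqtest} equals $\argmax_{y\in[C]}P(x\mid y)$ at the population optimum of the training objective.

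Second, I would invoke the standard fact that softmax cross-entropy is a strictly proper composite loss, so its population minimizer satisfies $P_\theta(y\mid x)=P(y\mid x)=P(x\mid y)\,\pi_y/\sum_{y'}P(x\mid y')\,\pi_{y'}$, where $\pi$ is the \emph{effective} class marginal the learner is calibrated to (in CISSL a mixture of the labeled-set prior and the pseudo-label distribution on the unlabeled set). The point is that the precise value of $\pi$ will never be needed; only its cancellation matters.

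Third --- the crux --- I would use the hypothesis that $\mathcal{I}$ is independent of $y$, i.e.\ $P(\mathcal{I}\mid y)$ does not depend on $y$, to compute $P_\theta(y\mid\mathcal{I})=P(\mathcal{I}\mid y)\,\pi_y/\sum_{y'}P(\mathcal{I}\mid y')\,\pi_{y'}=\pi_y$. Substituting into the second line of \cref{eqtest} then gives $f_\theta^*(x)=\argmax_{y\in[C]}P_\theta(y\mid x)/P_\theta(y\mid\mathcal{I})=\argmax_{y\in[C]}\bigl(P(x\mid y)\,\pi_y\bigr)/\bigl(P(x)\,\pi_y\bigr)=\argmax_{y\in[C]}P(x\mid y)$, which is exactly the Bayes-optimal rule identified in the first step; hence $f_\theta^*$ is Fisher consistent for $\mathrm{BER}$. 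I would add, for completeness, that the logit-space form in \cref{eq4} is equivalent because softmax is shift-invariant, so $g_\theta(\mathcal{I})$ and $\log P_\theta(y\mid\mathcal{I})$ differ by a $y$-independent constant that does not affect the $\argmax$.

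The main obstacle is not the algebra but stating precisely the sense in which ``$P_\theta(y\mid x)$ equals the true posterior'': in a semi-supervised pipeline the training signal blends genuine labels with refined pseudo-labels, so one must either (i) phrase the result conditionally --- assuming the base learner is calibrated to some distribution sharing the true class-conditionals and having class marginal $\pi$, exactly as the LA analysis assumes access to the true $P(y\mid x)$ --- or (ii) argue separately that the pseudo-label refinement alters only the class marginal, not the class-conditional structure. I would take route (i), which keeps the proof parallel to \citet{menon2020long} and isolates the genuinely new observation: independence of $\mathcal{I}$ forces $P_\theta(\cdot\mid\mathcal{I})$ to coincide with whatever prior the classifier has internalized, so CDMAD cancels precisely that prior irrespective of the (unknown, possibly mismatched) class distribution of the unlabeled set.
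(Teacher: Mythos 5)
Your proposal is correct and follows essentially the same route as the paper's proof: assume population-level calibration of $P_\theta(\cdot\mid\cdot)$, use independence of $\mathcal{I}$ from $y$ to identify $P_\theta(y\mid\mathcal{I})$ with the (effective) class prior, and conclude that the division in \cref{eqtest} cancels that prior to recover $\argmax_{y}P(x\mid y)$, the Bayes-optimal rule for the BER. Your version is somewhat more explicit than the paper's (spelling out the Bayes-rule cancellation, the shift-invariance linking \cref{eq4} to \cref{eqtest}, and the caveat about what distribution the SSL learner is actually calibrated to), but the underlying argument is the same.
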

 \begin{proof}
The refined class prediction for input $x$, $f_{\theta}^*\left(x\right)= \argmaxG_{y\in[C]}P_{\theta}\left(y|x\right)/P_{\theta}\left(y|\mathcal{I}\right)$. If the network is trained with the entire population, $P_{\theta}\left(y|x\right)$ becomes $P\left(y|x\right)$, and $P_{\theta}\left(y|\mathcal{I}\right)$ becomes $P\left(y|\mathcal{I}\right)$ due to the universal approximation theorem. Then, $f_{\theta}^*\left(x\right)$ $=\argmaxG_{y\in[C]}P_{\theta}\left(y|x\right)/P_{\theta}\left(y|\mathcal{I}\right)$ becomes $\argmaxG_{y\in[C]}P\left(y|x\right)/P\left(y|\mathcal{I}\right)$ and by the assumption of Proposition 1, i.e., $P\left(y\right)=P \left(y|I\right)$, it follows that $\argmaxG_{y\in[C]}P\left(y|x\right)/P\left(y|\mathcal{I}\right)=\argmaxG_{y\in[C]}P\left(y|x\right)/P\left(y\right)=\argmaxG_{y\in[C]}P\left(x|y\right)$, which minimizes the BER \citep{menon2020long,collell2016reviving}.
\end{proof}
\vspace{-0.1in}
Fisher consistency is a desirable property for an estimator and implies that in the entire population setting,  
optimizing the estimator yields the best result. In our case, we prove that in the population setting, CDMAD minimizes the BER. 
Although it is impractical to use data points of the entire population for training of network parameters under SSL setting, the above proof shows the possibility that CDMAD can effectively mitigate class imbalance by reducing BER. Through empirical results in section \ref{exp}, we verify that CDMAD does indeed mitigate the class imbalance.
\definecolor{Gray}{gray}{0.9}
\section{Experiments}\label{exp}
\subsection{Experimental setup}

We conducted experiments on CIFAR-10-LT, CIFAR-100-LT \citep{cui2019class}, STL-10-LT \citep{NEURIPS2020_a7968b43}, and Small-ImageNet-127 \citep{fan2022CoSSL} under the settings considered in previous studies \citep{lai2022smoothed, fan2022CoSSL}. We used the balanced accuracy (bACC) \citep{huang2016learning} and geometric mean (GM) \citep{kubat1997addressing} to evaluate the classification performance on CIFAR-10-LT and STL-10-LT and only bACC to evaluate classification performance on CIFAR-100-LT and Small-ImageNet-127, following \citet{fan2022CoSSL}. For CIFAR-10-LT, CIFAR-100-LT, and STL-10-LT, we repeated the experiments three times and report the average and standard error for the performance measures. 
We used a white image to measure the $g_{\theta}\left(y\right)$. 
Performance measures, description of datasets and experimental setup, and baseline algorithms are detailed in Appendices G, H, and I, respectively.

\subsection{Experimental results}
\label{result}
\begin{table}[htbp]
  \caption{Comparison of bACC/GM on CIFAR-10-LT
  }
  \vspace{-0.05in}
  \label{cifar}
  \centering
  \resizebox{3.25in}{!}{%
  \begin{tabular}{ccccc}
    \toprule
    \multicolumn{5}{c}{CIFAR-10-LT ($\gamma=\gamma_l=\gamma_u$, $\gamma_u$ is assumed  to be known) }                   \\
    \midrule
    \multicolumn{2}{c}{Algorithm}   &$\gamma=50$&$\gamma=100$&$\gamma=150$\\
    \midrule
    \multicolumn{2}{c}{Vanilla }&$65.2$\tiny$\pm0.05$ \normalsize/ $61.1$\tiny$\pm0.09$&$58.8$\tiny$\pm0.13$ \normalsize/ $58.2$\tiny$\pm0.11$&$55.6$\tiny$\pm0.43$ \normalsize/ $44.0$\tiny$\pm0.98$\\
    \midrule
    \multicolumn{2}{c}{Re-sampling}&$64.3$\tiny$\pm0.48$ \normalsize/ $60.6$\tiny$\pm0.67$&$55.8$\tiny$\pm0.47$ \normalsize/ $45.1$\tiny$\pm0.30$&$52.2$\tiny$\pm0.05$ \normalsize/ $38.2$\tiny$\pm1.49$\\
    \multicolumn{2}{c}{LDAM-DRW }&$68.9$\tiny$\pm0.07$ \normalsize/ $67.0$\tiny$\pm0.08$&$62.8$\tiny$\pm0.17$ \normalsize/ $58.9$\tiny$\pm0.60$&$57.9$\tiny$\pm0.20$ \normalsize/ $50.4$\tiny$\pm0.30$\\
    \multicolumn{2}{c}{cRT }&$67.8$\tiny$\pm0.13$ \normalsize/ $66.3$\tiny$\pm0.15$&$63.2$\tiny$\pm0.45$ \normalsize/ $59.9$\tiny$\pm0.40$&$59.3$\tiny$\pm0.10$ \normalsize/ $54.6$\tiny$\pm0.72$\\
    \midrule
    \multicolumn{2}{c}{FixMatch}&$79.2$\tiny$\pm0.33$ \normalsize/ $77.8$\tiny$\pm0.36$&$71.5$\tiny$\pm0.72$ \normalsize/ $66.8$\tiny$\pm1.51$&$68.4$\tiny$\pm0.15$ \normalsize/ $59.9$\tiny$\pm0.43$ \\
    \multicolumn{2}{c}{/++DARP+cRT}&$85.8$\tiny$\pm0.43$ \normalsize/ $85.6$\tiny$\pm0.56$&$82.4$\tiny$\pm0.26$ \normalsize/ $81.8$\tiny$\pm0.17$&$79.6$\tiny$\pm0.42$ \normalsize/ $78.9$\tiny$\pm0.35$ \\
    \multicolumn{2}{c}{/+CReST+LA }&$85.6$\tiny$\pm0.36$ \normalsize/ $81.9$\tiny$\pm0.45$&$81.2$\tiny$\pm0.70$ \normalsize/ $74.5$\tiny$\pm0.99$&$71.9$\tiny$\pm2.24$ \normalsize/ $64.4$\tiny$\pm1.75$ \\
    \multicolumn{2}{c}{/+ABC }&$85.6$\tiny$\pm0.26$ \normalsize/ $85.2$\tiny$\pm0.29$&$81.1$\tiny$\pm1.14$ \normalsize/ $80.3$\tiny$\pm1.29$&$77.3$\tiny$\pm1.25$ \normalsize/ $75.6$\tiny$\pm1.65$ \\
    \multicolumn{2}{c}{/+CoSSL}&$86.8$\tiny$\pm0.30$ \normalsize/ $86.6$\tiny$\pm0.25$&$83.2$\tiny$\pm0.49$ \normalsize/ $82.7$\tiny$\pm0.60$&$80.3$\tiny$\pm0.55$ \normalsize/ $79.6$\tiny$\pm0.57$ \\
    \multicolumn{2}{c}{/+SAW+LA}&$86.2$\tiny$\pm0.15$ \normalsize/ $83.9$\tiny$\pm0.35$&$80.7$\tiny$\pm0.15$ \normalsize/ $77.5$\tiny$\pm0.21$&$73.7$\tiny$\pm0.06$ \normalsize/ $71.2$\tiny$\pm0.17$ \\
    \multicolumn{2}{c}{/+Adsh} &$83.4$\tiny$\pm0.06$\normalsize/ -&$76.5$\tiny$\pm0.35$\normalsize/ -&$71.5$\tiny$\pm0.30$\normalsize/ - \\
    \multicolumn{2}{c}{/+DebiasPL} &-/ -&$80.6$\tiny$\pm0.50$\normalsize/ -&-/ - \\
    \multicolumn{2}{c}{/+UDAL }&$86.5$\tiny$\pm0.29$ \normalsize/ -&$81.4$\tiny$\pm0.39$ \normalsize/ -&$77.9$\tiny$\pm0.33$ \normalsize/ - \\
    \multicolumn{2}{c}{/+L2AC }&-/ -&$82.1$\tiny$\pm0.57$ \normalsize/ $81.5$\tiny$\pm0.64$&$77.6$\tiny$\pm0.53$ \normalsize/ $75.8$\tiny$\pm0.71$ \\
    
    \rowcolor{Gray}
    \multicolumn{2}{c}{/+CDMAD}&\textbf{87.3}\tiny$\pm0.12$ \normalsize/ \textbf{87.0}\tiny$\pm0.15$&\textbf{83.6}\tiny$\pm0.46$ \normalsize/ \textbf{83.1}\tiny$\pm0.57$&\textbf{80.8}\tiny$\pm0.86$ \normalsize/ \textbf{79.9}\tiny$\pm1.07$ \\
    \midrule
    \multicolumn{2}{c}{ReMixMatch }&$81.5$\tiny$\pm0.26$ \normalsize/ $80.2$\tiny$\pm0.32$&$73.8$\tiny$\pm0.38$ \normalsize/ $69.5$\tiny$\pm0.84$&$69.9$\tiny$\pm0.47$ \normalsize/ $62.5$\tiny$\pm0.35$ \\
    
    \multicolumn{2}{c}{/+DARP+cRT }&$87.3$\tiny$\pm0.61$ \normalsize/ $87.0$\tiny$\pm0.11$&$83.5$\tiny$\pm0.07$ \normalsize/ $83.1$\tiny$\pm0.09$&$79.7$\tiny$\pm0.54$ \normalsize/ $78.9$\tiny$\pm0.49$ \\
    \multicolumn{2}{c}{/+CReST+LA }&$84.2$\tiny$\pm0.11$ \normalsize/ -&$81.3$\tiny$\pm0.34$ \normalsize/ -&$79.2$\tiny$\pm0.31$ \normalsize/ - \\
    \multicolumn{2}{c}{/+ABC }&$87.9$\tiny$\pm0.47$ \normalsize/ $87.6$\tiny$\pm0.51$&$84.5$\tiny$\pm0.32$ \normalsize/ $84.1$\tiny$\pm0.36$&$80.5$\tiny$\pm1.18$ \normalsize/ $79.5$\tiny$\pm1.36$ \\
    \multicolumn{2}{c}{/+CoSSL }&$87.7$\tiny$\pm0.21$ \normalsize/ $87.6$\tiny$\pm0.25$&$84.1$\tiny$\pm0.56$ \normalsize/ $83.7$\tiny$\pm0.66$ &$81.3$\tiny$\pm0.83$ \normalsize/ $80.5$\tiny$\pm0.76$ \\
    \multicolumn{2}{c}{/+SAW+cRT }&$87.6$\tiny$\pm0.21$ \normalsize/ $87.4$\tiny$\pm0.26$&85.4\tiny$\pm0.32$ \normalsize/ $83.9$\tiny$\pm0.21$&$79.9$\tiny$\pm0.15$ \normalsize/ $79.9$\tiny$\pm0.12$ \\
    
    \rowcolor{Gray}
   \multicolumn{2}{c}{/+CDMAD}&\textbf{88.3}\tiny$\pm0.35$ \normalsize/ \textbf{88.1}\tiny$\pm0.35$&\textbf{85.5}\tiny$\pm0.46$ \normalsize/ \textbf{85.3}\tiny$\pm0.44$&\textbf{82.5}\tiny$\pm0.23$ \normalsize/ \textbf{82.0}\tiny$\pm0.30$ \\
    \bottomrule
  \end{tabular}}
\vspace{-0.25in}
\end{table}
\vspace{-0.025in}
\begin{table*}[htbp]

  \caption{Comparison of bACC/GM on CIFAR-10-LT and STL-10-LT under $\gamma_l\neq\gamma_u$ ($\gamma_u$ is assumed to be unknown). ReMixMatch* denotes ReMixMatch with the estimated class distribution of the unlabeled set  \citep{NEURIPS2020_a7968b43}.}
  \vspace{-0.05in}
  \label{cifar2}
  \centering
  \resizebox{6.5in}{!}{%
  \begin{tabular}{ccccccc}
    \toprule
    &&\multicolumn{3}{c}{CIFAR-10-LT ($\gamma_l=100$, $\gamma_u$ is assumed to be unknown)} &\multicolumn{2}{c}{STL-10-LT ($\gamma_u=$Unknown)}                  \\
    \midrule
    \multicolumn{2}{c}{Algorithm}   &$\gamma_u=1$&$\gamma_u=50$&$\gamma_u=150$&$\gamma_{l}=10$ &$\gamma_{l}=20$\\
    \midrule
    \multicolumn{2}{c}{FixMatch}&$68.9$\tiny$\pm1.95$ \normalsize/ $42.8$\tiny$\pm8.11$&$73.9$\tiny$\pm0.25$ \normalsize/ $70.5$\tiny$\pm0.52$&$69.6$\tiny$\pm0.60$ \normalsize/ $62.6$\tiny$\pm1.11$&$72.9$\tiny$\pm0.09$\normalsize/ $69.6$\tiny$\pm0.01$&$63.4$\tiny$\pm0.21$\normalsize/ $52.6$\tiny$\pm0.09$ \\
    \multicolumn{2}{c}{FixMatch+DARP}&$85.4$\tiny$\pm0.55$ \normalsize/ $85.0$\tiny$\pm0.65$&$77.3$\tiny$\pm0.17$ \normalsize/ $75.5$\tiny$\pm0.21$ &$72.9$\tiny$\pm0.24$ \normalsize/ $69.5$\tiny$\pm0.18$&$77.8$\tiny$\pm0.33$\normalsize/ $76.5$\tiny$\pm0.40$&$69.9$\tiny$\pm1.77$\normalsize/ $65.4$\tiny$\pm3.07$ \\
    
    \multicolumn{2}{c}{FixMatch+DARP+LA}&$86.6$\tiny$\pm1.11$ \normalsize/ $86.2$\tiny$\pm1.15$&$82.3$\tiny$\pm0.32$ \normalsize/ $81.5$\tiny$\pm0.29$ &$78.9$\tiny$\pm0.23$ \normalsize/ $77.7$\tiny$\pm0.06$&$78.6$\tiny$\pm0.30$\normalsize/ $77.4$\tiny$\pm0.40$&$71.9$\tiny$\pm0.49$\normalsize/ $68.7$\tiny$\pm0.51$ \\
    \multicolumn{2}{c}{FixMatch+DARP+cRT}&$87.0$\tiny$\pm0.70$ \normalsize/ $86.8$\tiny$\pm0.67$&$82.7$\tiny$\pm0.21$ \normalsize/ $82.3$\tiny$\pm0.25$ &$80.7$\tiny$\pm0.44$ \normalsize/ $80.2$\tiny$\pm0.61$&$79.3$\tiny$\pm0.23$\normalsize/ $78.7$\tiny$\pm0.21$&$74.1$\tiny$\pm0.61$\normalsize/ $73.1$\tiny$\pm1.21$ \\
    
    \multicolumn{2}{c}{FixMatch+ABC}&$82.7$\tiny$\pm0.49$ \normalsize/ $81.9$\tiny$\pm0.68$&$82.7$\tiny$\pm0.64$ \normalsize/ $82.0$\tiny$\pm0.76$&$78.4$\tiny$\pm0.87$ \normalsize/ $77.2$\tiny$\pm1.07$&$79.1$\tiny$\pm0.46$\normalsize/ $78.1$\tiny$\pm0.57$&$73.8$\tiny$\pm0.15$\normalsize/ $72.1$\tiny$\pm0.15$ \\
    \multicolumn{2}{c}{FixMatch+SAW}&$81.2$\tiny$\pm0.68$ \normalsize/ $80.2$\tiny$\pm0.91$&$79.8$\tiny$\pm0.25$ \normalsize/ $79.1$\tiny$\pm0.32$&$74.5$\tiny$\pm0.97$\normalsize/ $72.5$\tiny$\pm1.37$&-/-&-/-\\
    \multicolumn{2}{c}{FixMatch+SAW+LA}&$84.5$\tiny$\pm0.68$ \normalsize/ $84.1$\tiny$\pm0.78$&$82.9$\tiny$\pm0.38$ \normalsize/ $82.6$\tiny$\pm0.38$&$79.1$\tiny$\pm0.81$ \normalsize/ $78.6$\tiny$\pm0.91$&-/-&-/- \\
    \multicolumn{2}{c}{FixMatch+SAW+cRT}&$84.6$\tiny$\pm0.23$ \normalsize/ $84.4$\tiny$\pm0.26$&$81.6$\tiny$\pm0.38$ \normalsize/ $81.3$\tiny$\pm0.32$&$77.6$\tiny$\pm0.40$ \normalsize/ $77.1$\tiny$\pm0.41$&-/-&-/- \\
    
    \rowcolor{Gray}
    
     \multicolumn{2}{c}{FixMatch+CDMAD}&\textbf{87.5}\tiny$\pm0.46$ \normalsize/ \textbf{87.1}\tiny$\pm0.50$&\textbf{85.7}\tiny$\pm0.36$ \normalsize/ \textbf{85.3}\tiny$\pm0.38$&\textbf{82.3}\tiny$\pm0.23$ \normalsize/ \textbf{81.8}\tiny$\pm0.29$&\textbf{79.9}\tiny$\pm0.23$\normalsize/ \textbf{78.9}\tiny$\pm0.38$&\textbf{75.2}\tiny$\pm0.40$\normalsize/ \textbf{73.5}\tiny$\pm0.31$ \\
    \midrule
    \multicolumn{2}{c}{ReMixMatch}&$48.3$\tiny$\pm0.14$ \normalsize/ $19.5$\tiny$\pm0.85$&$75.1$\tiny$\pm0.43$ \normalsize/ $71.9$\tiny$\pm0.77$&$72.5$\tiny$\pm0.10$ \normalsize/ $68.2$\tiny$\pm0.32$&$67.8$\tiny$\pm0.45$\normalsize/ $61.1$\tiny$\pm0.92$&$60.1$\tiny$\pm1.18$\normalsize/ $44.9$\tiny$\pm1.52$ \\
    \multicolumn{2}{c}{ReMixMatch*}&$85.0$\tiny$\pm1.35$ \normalsize/ $84.3$\tiny$\pm1.55$&$77.0$\tiny$\pm0.12$ \normalsize/ $74.7$\tiny$\pm0.04$&$72.8$\tiny$\pm0.10$ \normalsize/ $68.8$\tiny$\pm0.21$&$76.7$\tiny$\pm0.15$ \normalsize/ $73.9$\tiny$\pm0.32$ &$67.7$\tiny$\pm0.46$ \normalsize/ $60.3$\tiny$\pm0.76$ \\
    \multicolumn{2}{c}{ReMixMatch*+DARP}&$86.9$\tiny$\pm0.10$ \normalsize/ $86.4$\tiny$\pm0.15$&$77.4$\tiny$\pm0.22$ \normalsize/ $75.0$\tiny$\pm0.25$&$73.2$\tiny$\pm0.11$ \normalsize/ $69.2$\tiny$\pm0.31$&$79.4$\tiny$\pm0.07$\normalsize/ $78.2$\tiny$\pm0.10$&$70.9$\tiny$\pm0.44$\normalsize/ $67.0$\tiny$\pm1.62$ \\
    \multicolumn{2}{c}{ReMixMatch*+DARP+LA}&$81.8$\tiny$\pm0.45$ \normalsize/ $80.9$\tiny$\pm0.40$&$83.9$\tiny$\pm0.42$ \normalsize/ $83.4$\tiny$\pm0.45$&$81.1$\tiny$\pm0.20$ \normalsize/ $80.3$\tiny$\pm0.26$&$80.6$\tiny$\pm0.45$\normalsize/ $79.6$\tiny$\pm0.55$&$76.8$\tiny$\pm0.60$\normalsize/ $74.8$\tiny$\pm0.68$ \\
    \multicolumn{2}{c}{ReMixMatch*+DARP+cRT}&$88.7$\tiny$\pm0.25$ \normalsize/ $88.5$\tiny$\pm0.25$&$83.5$\tiny$\pm0.53$ \normalsize/ $83.1$\tiny$\pm0.51$&$80.9$\tiny$\pm0.25$ \normalsize/ $80.3$\tiny$\pm0.31$&$80.9$\tiny$\pm0.53$\normalsize/ $80.0$\tiny$\pm0.46$&$76.7$\tiny$\pm0.50$\normalsize/ $74.9$\tiny$\pm0.70$ \\
    \multicolumn{2}{c}{ReMixMatch+ABC}&$76.4$\tiny$\pm5.34$ \normalsize/ $74.8$\tiny$\pm6.05$&$85.2$\tiny$\pm0.20$ \normalsize/ $84.7$\tiny$\pm0.25$&$80.4$\tiny$\pm0.40$ \normalsize/ $80.0$\tiny$\pm0.44$&$76.8$\tiny$\pm0.52$\normalsize/ $74.8$\tiny$\pm0.64$&$71.2$\tiny$\pm1.37$\normalsize/ $67.4$\tiny$\pm1.89$ \\
    \multicolumn{2}{c}{ReMixMatch*+SAW}&$87.0$\tiny$\pm0.75$ \normalsize/ $86.4$\tiny$\pm0.85$&$80.6$\tiny$\pm1.57$ \normalsize/ $79.2$\tiny$\pm2.19$&$77.6$\tiny$\pm0.76$ \normalsize/ $76.0$\tiny$\pm0.93$&-/-&-/-\\

    \multicolumn{2}{c}{ReMixMatch*+SAW+LA}&$74.2$\tiny$\pm1.49$ \normalsize/ $65.1$\tiny$\pm2.36$&$84.8$\tiny$\pm1.07$ \normalsize/ $82.4$\tiny$\pm2.32$&$81.3$\tiny$\pm2.42$ \normalsize/ $80.9$\tiny$\pm2.47$&-/-&-/- \\
    \multicolumn{2}{c}{ReMixMatch*+SAW+cRT}&$88.8$\tiny$\pm0.79$ \normalsize/ $88.6$\tiny$\pm0.83$&$84.5$\tiny$\pm0.78$ \normalsize/ $83.6$\tiny$\pm1.27$&$82.4$\tiny$\pm0.10$ \normalsize/ $82.0$\tiny$\pm0.10$&-/-&-/- \\
     
    \rowcolor{Gray}
    \multicolumn{2}{c}{ReMixMatch+CDMAD}&\textbf{89.9}\tiny$\pm0.45$ \normalsize/ \textbf{89.6}\tiny$\pm0.46$&\textbf{86.9}\tiny$\pm0.21$ \normalsize/ \textbf{86.7}\tiny$\pm0.17$&\textbf{83.1}\tiny$\pm0.46$ \normalsize/ \textbf{82.7}\tiny$\pm0.50$&\textbf{83.0}\tiny$\pm0.38$\normalsize/ \textbf{82.1}\tiny$\pm0.35$&\textbf{81.9}\tiny$\pm0.32$\normalsize/ \textbf{80.9}\tiny$\pm0.44$ \\
    \bottomrule
  \end{tabular}}
\vspace{-0.15in}
\end{table*}

\cref{cifar} summarizes bACC and GM of the baseline algorithms and proposed algorithm on CIFAR-10-LT when $\gamma_u$ is assumed to be known and equal to $\gamma_l$. We can first observe that the vanilla algorithm (Deep CNN trained with cross-entropy loss) performed the worst. CIL (Re-sampling, LDAM-DRW, and cRT) mitigated class imbalance but did not significantly improve the classification performance compared to the vanilla algorithm. These results demonstrate the importance of using the unlabeled set. Compared to the vanilla algorithm, SSL algorithms (FixMatch and ReMixMatch) significantly improved the classification performance. However, their lower performance than that of the CISSL algorithms highlights the importance of mitigating class imbalance. By mitigating class imbalance and leveraging unlabeled data, CISSL algorithms achieved higher performance than the other algorithms. Overall, the proposed algorithm outperformed all other algorithms. 
This may be because CDMAD effectively refined biased pseudo-labels and class predictions on the test set by considering the classifier's biased degree.
\begin{table}[htbp]
\caption{Comparison of bACC/GM under $\gamma_l=\gamma_u=100$ (reversed).}
\vspace{-0.125in}
  
  \centering 
  \resizebox{3.2in}{!}{%
 \begin{tabular}{cccccc}
    \toprule     
    \multicolumn{6}{c}{CIFAR-10-LT, $\gamma_l=100$, $\gamma_u=100$ (reversed)} \\
    \midrule
    Algorithm & ABC & SAW & SAW+LA & SAW+cRT & \textbf{CDMAD} \\
    
    \midrule
    
    FixMatch+ &69.5/66.8&72.3/68.7&74.1/72.0&75.5/73.9&\textbf{77.1}/\textbf{75.4}\\
    ReMixMatch+ &63.6/60.5&79.5/78.5&50.2/14.8&80.8/79.9&\textbf{81.7}/\textbf{81.0}\\
    \bottomrule
  \end{tabular}}
\label{table:reversed}
\vspace{-0.1in}
\end{table}

\begin{table}

  \caption{Comparison of bACC on CIFAR-100-LT. 
  }
  \vspace{-0.1in}
  \label{cifar3}
  \centering
  \resizebox{3in}{!}{%
  \begin{tabular}{ccccc}
    \toprule
    \multicolumn{5}{c}{CIFAR-100-LT ($\gamma=\gamma_l=\gamma_u$, $\gamma_u$ is assumed to be known)}                   \\
    \midrule
    \multicolumn{2}{c}{Algorithm}   &$\gamma=20$&$\gamma=50$&$\gamma=100$\\
    \midrule
    \multicolumn{2}{c}{FixMatch}&$49.6$\tiny$\pm0.78$&$42.1$\tiny$\pm0.33$&$37.6$\tiny$\pm0.48$  \\
    \multicolumn{2}{c}{FixMatch+DARP }&$50.8$\tiny$\pm0.77$&$43.1$\tiny$\pm0.54$&$38.3$\tiny$\pm0.47$ \\
    \multicolumn{2}{c}{FixMatch+DARP+cRT }&$51.4$\tiny$\pm0.68$&$44.9$\tiny$\pm0.54$&$40.4$\tiny$\pm0.78$ \\
    \multicolumn{2}{c}{FixMatch+CReST }&$51.8$\tiny$\pm0.12$&$44.9$\tiny$\pm0.50$&$40.1$\tiny$\pm0.65$ \\
    \multicolumn{2}{c}{FixMatch+CReST+LA }&$52.9$\tiny$\pm0.07$&$47.3$\tiny$\pm0.17$&$42.7$\tiny$\pm0.70$ \\
    \multicolumn{2}{c}{FixMatch+ABC }&$53.3$\tiny$\pm0.79$&$46.7$\tiny$\pm0.26$&$41.2$\tiny$\pm0.06$ \\
    \multicolumn{2}{c}{FixMatch+CoSSL }&$53.9$\tiny$\pm0.78$&$47.6$\tiny$\pm0.57$&$43.0$\tiny$\pm0.61$ \\
    \multicolumn{2}{c}{FixMatch+UDAL }&-&$48.0$\tiny$\pm0.56$&$43.7$\tiny$\pm0.41$ \\
    
    \rowcolor{Gray}
    \multicolumn{2}{c}{FixMatch+CDMAD}&\textbf{54.3}\tiny$\pm0.44$&\textbf{48.8}\tiny$\pm0.75$&\textbf{44.1}\tiny$\pm0.29$ \\
    \midrule
    \multicolumn{2}{c}{ReMixMatch}&$51.6$\tiny$\pm0.43$&$44.2$\tiny$\pm0.59$&$39.3$\tiny$\pm0.43$ \\
    \multicolumn{2}{c}{ReMixMatch+DARP }&$51.9$\tiny$\pm0.35$&$44.7$\tiny$\pm0.66$&$39.8$\tiny$\pm0.53$ \\
    \multicolumn{2}{c}{ReMixMatch+DARP+cRT }&$54.5$\tiny$\pm0.42$&$48.5$\tiny$\pm0.91$&$43.7$\tiny$\pm0.81$ \\
    \multicolumn{2}{c}{ReMixMatch+CReST }&$51.3$\tiny$\pm0.34$&$45.5$\tiny$\pm0.76$&$41.0$\tiny$\pm0.78$ \\
    \multicolumn{2}{c}{ReMixMatch+CReST+LA }&$51.9$\tiny$\pm0.60$&$46.6$\tiny$\pm1.14$&$41.7$\tiny$\pm0.69$ \\
    \multicolumn{2}{c}{ReMixMatch+ABC }&$55.6$\tiny$\pm0.35$&$47.9$\tiny$\pm0.10$&$42.2$\tiny$\pm0.12$ \\
    \multicolumn{2}{c}{ReMixMatch+CoSSL }&$55.8$\tiny$\pm0.62$&$48.9$\tiny$\pm0.61$&$44.1$\tiny$\pm0.59$ \\
    
    \rowcolor{Gray}
    \multicolumn{2}{c}{ReMixMatch+CDMAD}&\textbf{57.0}\tiny$\pm0.32$&\textbf{51.1}\tiny$\pm0.46$&\textbf{44.9}\tiny$\pm0.42$ \\
    \bottomrule
  \end{tabular}}
\vspace{-0.2in}
\end{table}

\cref{cifar2} summarizes bACC and GM of the baseline algorithms and proposed algorithm on CIFAR-10-LT and STL-10-LT when $\gamma_u$ is unknown and different from $\gamma_l$. 
ReMixMatch performed poorly when $\gamma_l$ and $\gamma_u$ differed significantly, probably because the distribution alignment technique employed in ReMixMatch significantly degraded the quality of pseudo-labels. By aligning the class distribution of the pseudo-labels with that of the unlabeled set estimated as in \citet{NEURIPS2020_a7968b43}, ReMixMatch* significantly improved the classification performance. However, the estimation of the class distribution of the unlabeled set becomes more time-consuming as the amount of unlabeled data increases. Furthermore, the estimation process requires more than 10 labeled samples for each class, making it unsuitable for datasets with a very small number of labeled samples, such as CIFAR-100-LT. In contrast, CDMAD does not rely on the estimated class distribution of the unlabeled set, making it more effective than baseline algorithms combined with ReMixMatch* for real-world scenarios. We can also observe that the LA decreased the performance of ReMixMatch*+DARP and ReMixMatch*+SAW when $\gamma_l$ and $\gamma_u$ differed significantly. This may be because the LA considers only the class distribution of the labeled set for re-balancing the classifier when the class distribution of the unlabeled set is unknown.  
In Appendix K, we present further comparisons of LA and CDMAD under the settings that the class distributions of labeled and unlabeled sets mismatch. 

We also conducted experiments under the setting that the class distribution of the unlabeled set is imbalanced in the opposite direction to the labeled set. From \cref{table:reversed}, we can observe that CDMAD outperforms the baseline algorithms.

\begin{table}

  \caption{Comparison of bACC on Small-ImageNet-127 (size $32\times32$ and $64\times64$, $\gamma_u$ is assumed to be known)}
  \vspace{-0.05in}
  \label{imagenet}
  \centering
  \resizebox{2.9in}{!}{%
  \begin{tabular}{cccccc}
    \toprule
    \multicolumn{6}{c}{Small-ImageNet-127 ($\gamma=\gamma_l=\gamma_u$, $\gamma_u$ is assumed to be known)}                   \\
    \midrule
    \multicolumn{2}{c}{Algorithm}  && &$32\times32$&$64\times64$\\
    \midrule
    \multicolumn{2}{c}{FixMatch}&&&$29.7$&$42.3$\\
    \multicolumn{2}{c}{FixMatch+DARP }&&&$30.5$&$42.5$ \\
    \multicolumn{2}{c}{FixMatch+DARP+cRT }&&&$39.7$&$51.0$ \\
    \multicolumn{2}{c}{FixMatch+CReST }&&&$32.5$&$44.7$ \\
    \multicolumn{2}{c}{FixMatch+CReST+LA }&&&$40.9$&$55.9$ \\
    \multicolumn{2}{c}{FixMatch+ABC }&&&$46.9$&$56.1$ \\
    \multicolumn{2}{c}{FixMatch+CoSSL }&&&$43.7$&$53.8$ \\
    \rowcolor{Gray}
    \multicolumn{2}{c}{FixMatch+CDMAD}&&&\textbf{48.4}&\textbf{59.3}\\
    \bottomrule
  \end{tabular}}
\vspace{-0.25in}
\end{table}

\cref{cifar3} summarizes bACC of the baseline algorithms and the proposed algorithm on CIFAR-100-LT. The proposed algorithm outperformed baseline algorithms. 
These results demonstrate that the proposed algorithm is well-suited for CISSL on datasets with a large number of classes. 
Moreover, given that 
several minority classes in the training set have only one labeled sample when $\gamma=100$, the results indicate that the proposed algorithm may outperform the baseline CISSL algorithms when the number of labeled samples from minority classes is extremely limited. This may be because CDMAD effectively compensates for the lack of labeled samples by well refining the biased pseudo-labels compared to the baseline algorithms.

\cref{imagenet} summarizes bACC of the baseline algorithms on Small-ImageNet-127. For both sizes of Small-ImageNet-127, CDMAD outperformed the baseline algorithms by a large margin. 
The effective use of unlabeled samples through appropriate refinement of the pseudo-labels may allow the proposed algorithm suitable for CISSL on large-scale datasets. 
Given that the test set of Small-ImageNet-127 is class-imbalanced, the results also show that CDMAD can be suitable for CISSL with a class-imbalanced test set. 

\begin{table}[htbp]
\vspace{-0.05in}
   \caption{Comparison of bACC/GM on CIFAR-10-LT with FreeMatch as the base SSL algorithm}

  \vspace{-0.1in}
  \label{freematch}
  \centering
  \resizebox{2.9in}{!}{%
  \begin{tabular}{ccc}
    \toprule
    \multicolumn{3}{c}{CIFAR-10-LT}                   \\
    \midrule
    Algorithm  &$\gamma_l=\gamma_u=100$&$\gamma_l=100$, $\gamma_u=1$\\
    \midrule
    FreeMatch&75.4/72.9&74.2/69.5\\
    FreeMatch+SAW+cRT&82.8/82.3&86.4/86.2 \\
    \rowcolor{Gray}
    FreeMatch+CDMAD&\textbf{84.8}/\textbf{84.4}&\textbf{89.0}/\textbf{88.7}\\
    \bottomrule
  \end{tabular}}
\vspace{-0.15in}
\end{table}

To verify that CDMAD can be also effectively combined with recent SSL algorithms, we conducted experiments by setting FreeMatch \cite{wang2023freematch} as the base SSL algorithm. 
From \cref{freematch}, we can observe that CDMAD outperforms FreeMatch and FreeMatch+SAW+cRT. 

We also compared the classification performance of CDMAD with ACR \cite{wei2023towards}, a recent CISSL algorithm. From \cref{table:acr}, we can observe that CDMAD outperforms ACR.
\begin{table}[htbp]
\vspace{-0.1in}
  \caption{Comparison of bACC/GM on CIFAR-10-LT }
  \vspace{-0.1in}
  \centering 
  \resizebox{2.7in}{!}{%
 \begin{tabular}{ccc}
    \toprule     
    Algorithm/ CIFAR-10-LT &$\gamma_l=\gamma_u=100$&$\gamma_l=100$, $\gamma_u=1$ \\
    
    \midrule
    
    FixMatch+ACR &81.8/81.4&85.6/85.3\\
    \textbf{FixMatch+CDMAD} &\textbf{83.6/83.1}&\textbf{87.5/87.1} \\
    \bottomrule
      \vspace{-0.5in}
  \end{tabular}}
\label{table:acr}
\end{table}
\begin{figure*}[htbp]

 \begin{center}
        \begin{tabular}{cccc}
			 \includegraphics[width=3.6cm, height=3.6cm]{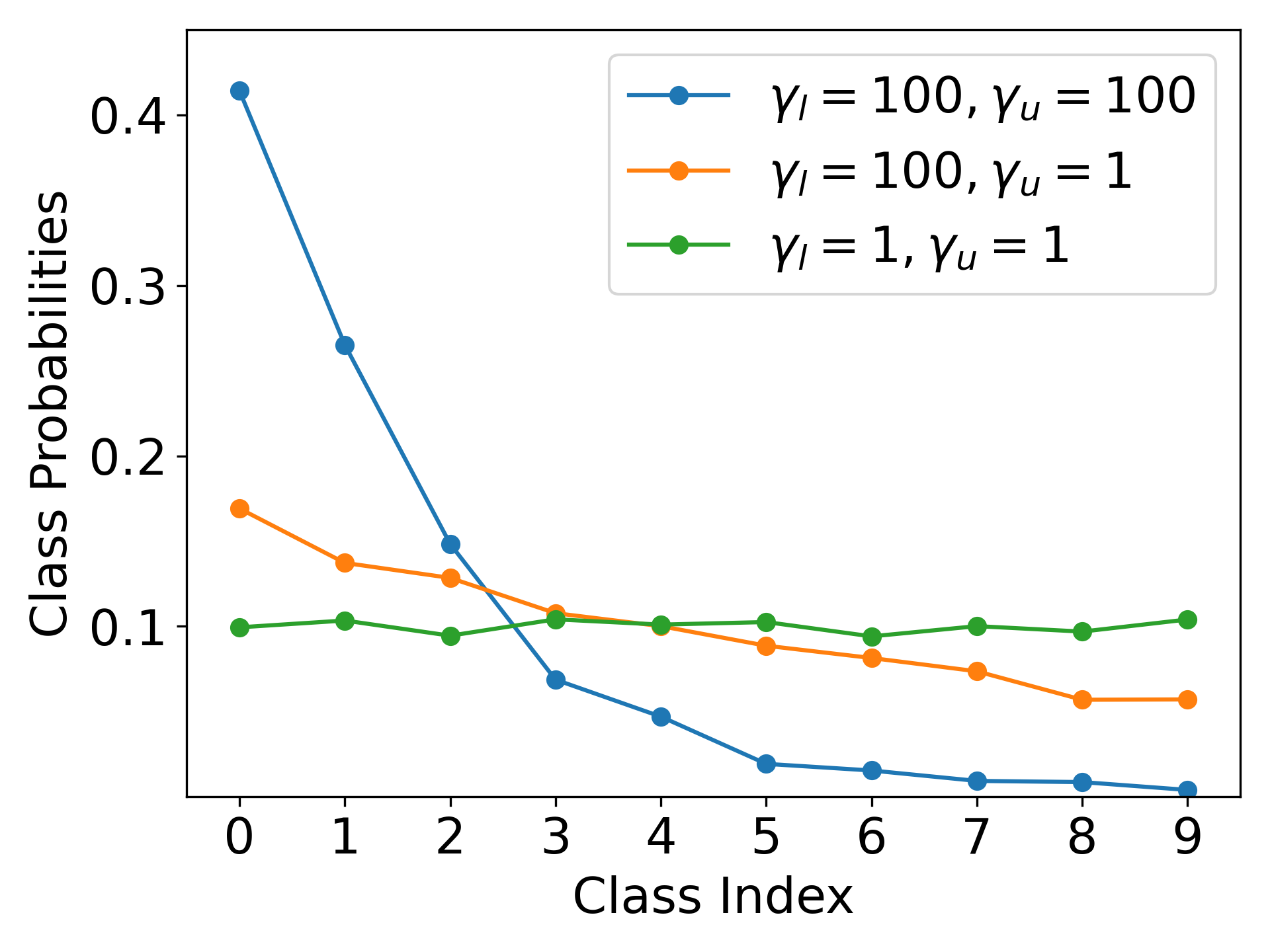}&\includegraphics[width=3.6cm, height=3.6cm]{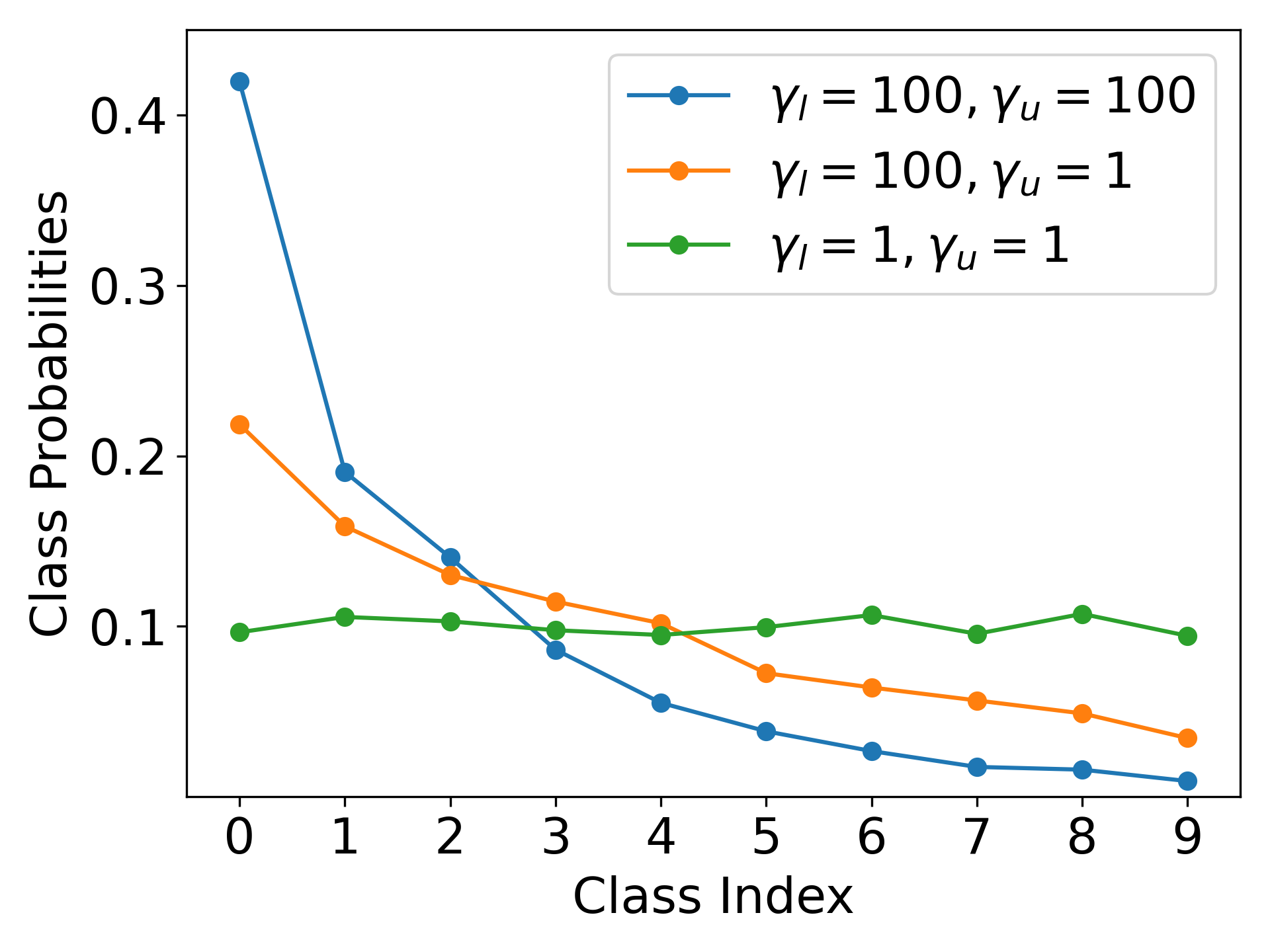}&\includegraphics[width=3.6cm, height=3.6cm]{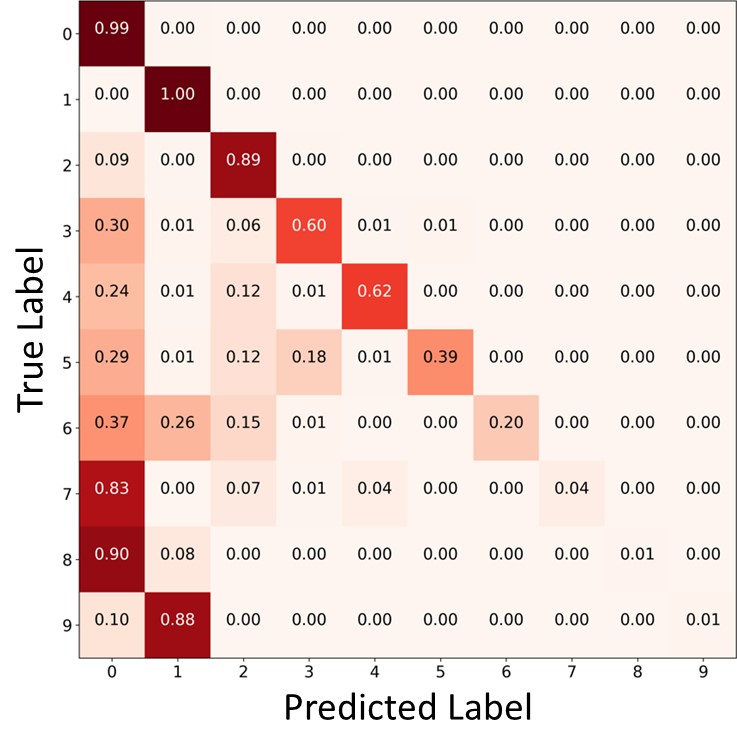}&\includegraphics[width=3.6cm, height=3.6cm]{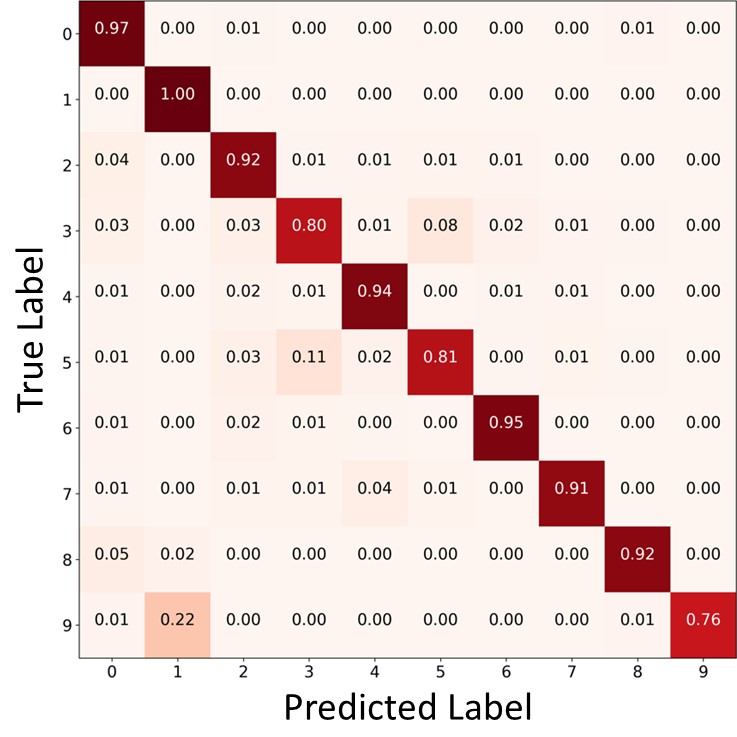} \\\
			  \footnotesize{(a) FixMatch+CDMAD} &\footnotesize{(b) ReMixMatch+CDMAD} &\footnotesize{(c) ReMixMatch} &\footnotesize{(d) ReMixMatch+CDMAD} 
		\end{tabular}
	\end{center}
    \vspace{-0.2in}
	\caption{(a) and (b) present the class probabilities predicted on 
 a white image using the proposed algorithm. (c) and (d) present the confusion matrices of the class predictions on test samples.}
	\label{biaseddegree}
\vspace{-0.1in}
\end{figure*}
\begin{table*}[htbp]

  \caption{Ablation study for the proposed algorithm on CIFAR-10-LT under $\gamma_l=100$ and $\gamma_u=1$}
  \vspace{-0.1in}
 \label{ablationtable}
  \centering 
  \resizebox{6.5in}{!}{%
  
  \begin{tabular}{lclc}
    \toprule     
    \textbf{Ablation study ($\gamma_l=100$, $\gamma_u=1$)}&\textbf{bACC/GM}&&\textbf{bACC/GM}\\
    \midrule
    \textbf{FixMatch+CDMAD}&$\mathbf{87.5/87.1}$&\textbf{ReMixMatch+CDMAD}&$\mathbf{89.9/89.6}$ \\
    \midrule
    Without CDMAD for refining pseudo-labels&$78.2/75.8$& Without CDMAD for refining pseudo-labels&$72.3/65.9$ \\
    Without CDMAD for test phase &$84.9/84.1$&Without CDMAD for test phase&$88.2/87.7$ \\
    With the use of hard pseudo-labels &$86.7/86.3$&With the use of sharpened pseudo-labels &$88.9/88.6$\\
    With the use of confidence threshold $\tau=0.95$ &$86.8/86.3$& With the use of distribution alignment technique &$80.4/78.5$\\
    \bottomrule
  \end{tabular}}
	\vspace{-0.2in}
\end{table*} 

We present additional experimental results in Appendix. Specifically, fine grained results (many/medium/few group performance) are summarized in Appendix L. 
In Appendix M, we compare the classification performance of CDMAD with DASO \citep{oh2022daso} whose classification performance was measured under different settings with the settings of ours.

\subsection{Qualitative analyses}
\label{analysis}



We argue that the CDMAD can implicitly consider the class distributions of both labeled and unlabeled sets when measuring the classifier's biased degree. To verify this argument, in \cref{biaseddegree} (a) and (b), we analyze the class probabilities predicted on a white image,  $P_{\theta}\left(y|\mathcal{I}\right)$, using FixMatch+CDMAD and ReMixMatch+CDMAD trained on CIFAR-10-LT under the three settings: 1) $\gamma_{l}=\gamma_u=100$, 2) $\gamma_{l}=100$ and $\gamma_u=1$, and 3) $\gamma_{l}=\gamma_u=1$.
 
We can observe that both FixMatch+CDMAD and ReMixMatch+CDMAD produced highly nonuniform class probabilities when they were trained under $\gamma_{l}=100$ and $\gamma_u=100$. In contrast, when trained with $\gamma_{l}=100$ and $\gamma_u=1$, both algorithms produced significantly more balanced class probabilities. These results show that the classifier's biased degree, $g_{\theta}\left(\mathcal{I}\right)$, depends on the class distribution of the unlabeled set. Moreover, the comparison of nearly uniform class probabilities produced under $\gamma_{l}=\gamma_u=1$  and the results under $\gamma_{l}=100$ and $\gamma_u=1$ shows that $g_{\theta}\left(\mathcal{I}\right)$ also depends on the class distribution of the labeled set. Based on the above findings, CDMAD can be considered as measuring the classifier's biased degree by implicitly incorporating the class distributions of both labeled and unlabeled sets. It is worth noting that under $\gamma_l=100$ and $\gamma_u=1$, FixMatch+CDMAD and ReMixMatch+CDMAD produced significantly more balanced class probabilities compared to FixMatch and ReMixMatch in \cref{prior}. This may be because the use of biased pseudo-labels generated by FixMatch and ReMixMatch for training exacerbated class imbalance, whereas CDMAD effectively refined the biased pseudo-labels, as discussed in Figure 6 of Appendix J. 

We also argue that the ability of CDMAD to implicitly incorporate the class distributions of both labeled and unlabeled sets enables it to effectively mitigate class imbalance even under severe class distribution mismatch. 
To verify this argument, we present the confusion matrices of the class predictions on the test set of CIFAR-10 using ReMixMatch and ReMixMatch+CDMAD trained on CIFAR-10-LT under $\gamma_l=100$ and $\gamma_u=1$ in \cref{biaseddegree} (c) and (d). The value in the $i$th row and $j$th column represents the proportion of the $i$th class samples classified as the $j$th class.
We can observe that the class predictions of ReMixMatch in \cref{biaseddegree} (c) are biased towards the majority classes. Specifically, the data points in the minority classes (e.g., classes 7, 8 and 9) are often misclassified into the majority classes (e.g., classes 0 and 1). In contrast, ReMixMatch+CDMAD in \cref{biaseddegree} (d) made nearly balanced class predictions. Further qualitative analyses are presented in Appendix J. 

\subsection{Ablation study}
\label{ablation}

To investigate the effectiveness of each element of CDMAD, we conducted an ablation study using CIFAR-10-LT ($\gamma_l=100$ and $\gamma_u=1$, $\gamma_u$ is assumed to be unknown). Each row in \cref{ablationtable} represents the proposed algorithm under the condition specified in that row. The results are as follows: 1) Without the refinement of the biased pseudo-labels using CDMAD in the training phase, the classification performance significantly decreased. 
2) Without the refinement of the biased class predictions on the test set using CDMAD, the classification performance decreased. 
3) With entropy minimization (using hard pseudo-labels and sharpened pseudo-labels) of the class predictions during training, the classification performance slightly decreased. 
4) For FixMatch+CDMAD, the use of confidence threshold $\tau=0.95$ slightly decreased the classification performance. 
5) For ReMixMatch+CDMAD, the classification performance significantly decreased by using the distribution alignment technique instead of adding the supervised loss on $\alpha\left(x^m_b\right)$ for training. 
These results indicate that every element of CDMAD enhances performance.


\begin{table}[htbp]
\vspace{0 in}
  \caption{Experiments with the replacement of $\mathcal{I}$ by other inputs}
\vspace{-0.05 in}
  \label{replace2}
  \centering
  \resizebox{2.9in}{!}{%
  \begin{tabular}{cccc}
    \toprule
    \multicolumn{2}{c}{ReMixMatch+CDMAD}&\multicolumn{2}{c}{CIFAR-10-LT}                   \\
    \midrule
    \multicolumn{2}{c}{Input}   &$\gamma_l=\gamma_u=100$&$\gamma_l=100$, $\gamma_u=1$\\
    \midrule
    \multicolumn{2}{c}{Uniform}&$81.3$/ $80.7$&$85.3$/ $84.2$ \\
    \multicolumn{2}{c}{Bernoulli}&$82.5$/ $82.0$&$83.6$/ $82.8$ \\
    \multicolumn{2}{c}{Normal}&$78.4$/ $77.5$&$84.0$/ $83.2$ \\
    \multicolumn{2}{c}{Black}&$84.8$/ $84.5$&$89.3$/ $89.0$ \\
    \multicolumn{2}{c}{Red}&$84.8$/ $84.6$&$90.1$/ $89.9$ \\
    \multicolumn{2}{c}{Green}&$84.9$/ $84.6$&$89.3$/ $88.9$ \\
    \multicolumn{2}{c}{Blue}&$84.9$/ $84.7$&$90.2$/ $89.9$ \\
    \multicolumn{2}{c}{Gray}&$85.1$/ $84.9$&$89.6$/ $89.3$ \\
    \multicolumn{2}{c}{White}&$85.5$/ $85.3$&$89.9$/ $89.6$ \\
    \bottomrule
  \end{tabular}}
  \vspace{-0.25 in}
\end{table}

To explore whether the classifier's biased degree can be measured using other images rather than the white image, we conducted experiments by replacing $\mathcal{I}$ with other solid color images or an image consisting of random pixel values that are generated from uniform, Bernoulli, and normal distributions. 
Experimental results are summarized in \cref{replace2}. From the table, we can observe that the classification performance of ReMixMatch+CDMAD decreased when $\mathcal{I}$ was replaced by images consisting of random pixels. This may be because the parameters of the distributions (e.g., mean and standard deviation of a normal distribution) used to generate random pixels may be related to specific classes. In contrast, the classification performance of ReMixMatch+CDMAD did not significantly change when $\mathcal{I}$ was replaced by images of other solid color images. These results show that other solid color images can also be used to measure the classifier's biased degree.

\begin{table}[htbp]
\vspace{-0.125in}
  \caption{Experiments with replacing $\mathcal{I}$ by non-image input}
  \vspace{-0.075in}
  \centering 
  \resizebox{3.3in}{!}{%
 \begin{tabular}{cccc}
    \toprule     
    Algorithm&CIFAR-10-LT &$\gamma_l=\gamma_u=100$&$\gamma_l=100$, $\gamma_u=1$ \\
    \midrule
    
    \multirow{2}{*}{FixMatch+CDMAD} &White image&83.6/83.1&87.5/87.1\\
    &Non-image&84.0/83.6&87.4/87.0\\
    \midrule
    \multirow{2}{*}{ReMixMatch+CDMAD} &White image&85.5/85.3&89.9/89.6\\
     &Non-image&85.6/85.4&89.8/89.6\\
    
    \bottomrule
  \end{tabular}}
\label{table:nonimage}
\vspace{-0.125in}
\end{table}

However, the assumption that a solid color image is non-informative for the class labels may fail when the classification of images is related to their color. To address this concern, we additionally considered an image with pixel values that are outside the range [0, 255] (actually, \textit{non-image input}) to replace the solid color image $\mathcal{I}$. For example, in the case of CIFAR-10-LT, the maximum value of each (R,G,B) channel becomes (2.06, 2.12, 2.11) after input normalization. In this case, if we generate an input with every pixel's (R,G,B) values set to (3, 3, 3), the input would not be associated with a specific class because it is actually not an image. By replacing $\mathcal{I}$ with this non-image input, we conducted experiments and presented the results in \cref{table:nonimage}. We observe that CDMAD with the non-image input performs comparably with the white image. These results verify that the non-image input can be effectively used for measuring the classifier's biased degree, overcoming the challenge of  finding data that are non-informative for the class labels. 

\vspace{-0.05in}
\section{Conclusion}
\vspace{-0.025in}
We proposed CDMAD, which considers the classifier's biased degree towards each class to appropriately mitigate the class imbalance in SSL even under severe class distribution mismatch between the labeled and unlabeled sets. Using CDMAD, we refined biased pseudo-labels as well as biased class predictions on test samples. Experiments on four benchmark datasets show that the proposed algorithm outperforms the existing CISSL algorithms. Moreover, the qualitative analysis and ablation study on the proposed algorithm demonstrate the effectiveness of each component of CDMAD. In this paper, we used a solid color image to measure the classifier's biased degree, which lacks a firm theoretical basis. In future research, we plan to establish a theoretical foundation for utilizing the solid color image to measure the classifier's biased degree. 

\textbf{Acknowledgements} This research was supported by the National Research Foundation of Korea (NRF) grant funded by the Korea government (MSIT) (2023R1A2C2005453, RS-2023-00218913).

{
    \small
    \bibliographystyle{ieeenat_fullname}
    \bibliography{main}
}

\newpage
\appendix

\definecolor{Gray}{gray}{0.9}

\appendix
\def\thesection{\Alph{section}}

\section{Core part of the code for CDMAD}
\label{examplecode}
\begin{figure}[htbp]
\vspace{-0.1 in}
	\begin{center}
        \includegraphics[width=3.3in,height=1.25in]{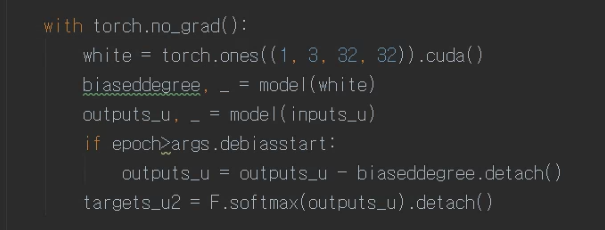}
      
	\end{center}
 \vspace{-0.1 in}
	\caption{Code for refining pseudo-labels using CDMAD}
	\label{fig:examplecode}
 \vspace{-0.1 in}
\end{figure}
\cref{fig:examplecode} presents a core part of the code for CDMAD to refine the biased pseudo-labels of the base SSL algorithm. As we can see in \cref{fig:examplecode}, CDMAD is very easy to implement. We simply need to calculate the logits for an image without any patterns (solid color image) and then subtract them from the logits for unlabeled samples. Biased class predictions on test samples are refined in a similar way. 

\section{Further related works}
\label{related} 
\textbf{Semi-supervised learning (SSL)} 
algorithms use unlabeled data for training when labeled samples are insufficient. Entropy minimization \citep{NIPS2004_96f2b50b} encourages the class predictions on unlabeled samples to be confident by directly minimizing entropy or using pseudo-labels \citep{lee2013pseudo}. Consistency regularization \citep{park2018adversarial,miyato2018virtual,NIPS2017_68053af2} encourages the class predictions on two augmented versions of an unlabeled sample to be consistent. FixMatch \citep{sohn2020fixmatch} and ReMixMatch \citep{berthelot2019remixmatch} conduct entropy minimization and consistency regularization using strong data augmentation techniques \citep{devries2017improved,cubuk2020randaugment}. ReMixMatch also conducts Mixup regularization 
\citep{ijcai2019-504,NEURIPS2019_1cd138d0} and self-supervised learning with rotation \citep{gidaris2018unsupervised}. CoMatch \citep{li2021comatch} proposed graph-based contrastive learning using embedding and pseudo-label graphs. Recently, curriculum pseudo-labeling that considers the learning status for each class was proposed by FlexMatch \citep{zhang2021flexmatch} and extended in Adsh \cite{guo2022class}, SoftMatch \cite{chen2023softmatch} and FreeMatch \cite{wang2023freematch}.

\textbf{Class-imbalanced learning (CIL)} algorithms mitigate class imbalance to improve classification performance for minority classes. Resampling techniques \citep{barandela2003restricted,chawla2002smote,japkowicz2000class,he2009learning} balance the number of each class samples, and reweighting techniques \citep{NIPS2013_9aa42b31,NIPS2017_147ebe63,huang2016learning,cui2019class,jamal2020rethinking} balance the loss for each class. 
\citet{NEURIPS2019_621461af} and \citet{NEURIPS2020_2ba61cc3} proposed losses that minimize a generalization error bound, and \citet{kim2020m2m,yin2018feature} transferred knowledge from the data of the majority classes to the data of minority classes. \citet{kang2019decoupling} decoupled representation and classifier learning. \citet{menon2020long} proposed post-hoc logit-adjustment and loss, which is Fisher consistent for minimizing the balanced error. Recently, CIL algorithms based on contrastive learning \citep{kang2020exploring,kang2020exploring,jiang2021self,wang2021contrastive,cui2021parametric,li2022targeted} and multi-expert learning \citep{zhou2020bbn,xiang2020learning,cai2021ace,wang2021longtailed,li2022trustworthy,zhang2022self}  received considerable attention.

\section{Data augmentation techniques}
\label{augment} 
\begin{figure*}[htbp]
\vspace{-0.05 in}	
 \begin{center}
		\begin{tabular}{cccc}
	\hspace{-0.2 in}		 \includegraphics[width=4cm, height=4cm]{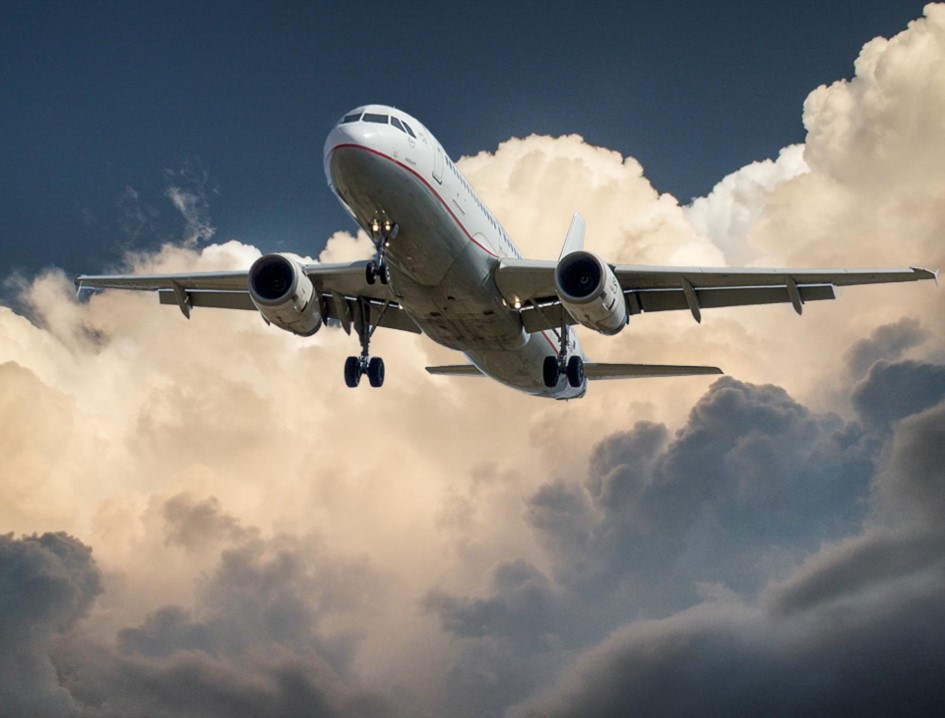}& \includegraphics[width=4cm, height=4cm]{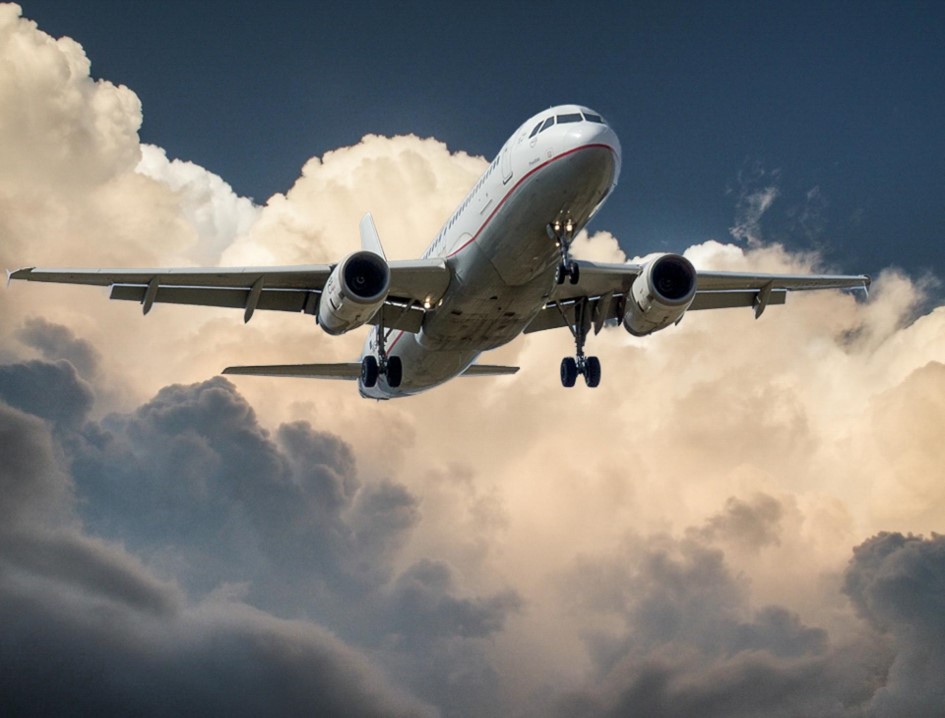}&\includegraphics[width=4cm, height=4cm]{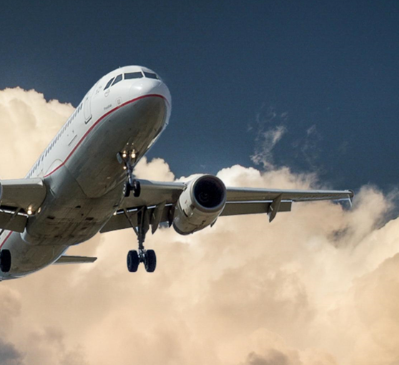}& \includegraphics[width=4cm, height=4cm]{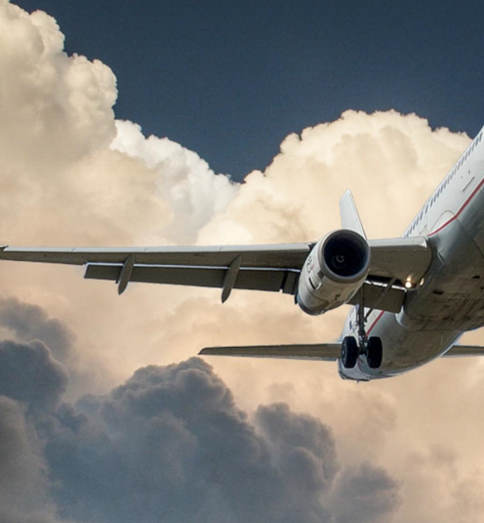} \\\
	\hspace{-0.2 in}		  \footnotesize{(a) Random horizontal flipping 1} &\footnotesize{(b) Random horizontal flipping 2}&\footnotesize{(c) Random cropping 1} &\footnotesize{(d) Random cropping 2}\\\
\hspace{-0.2 in}     \includegraphics[width=4cm, height=4cm]{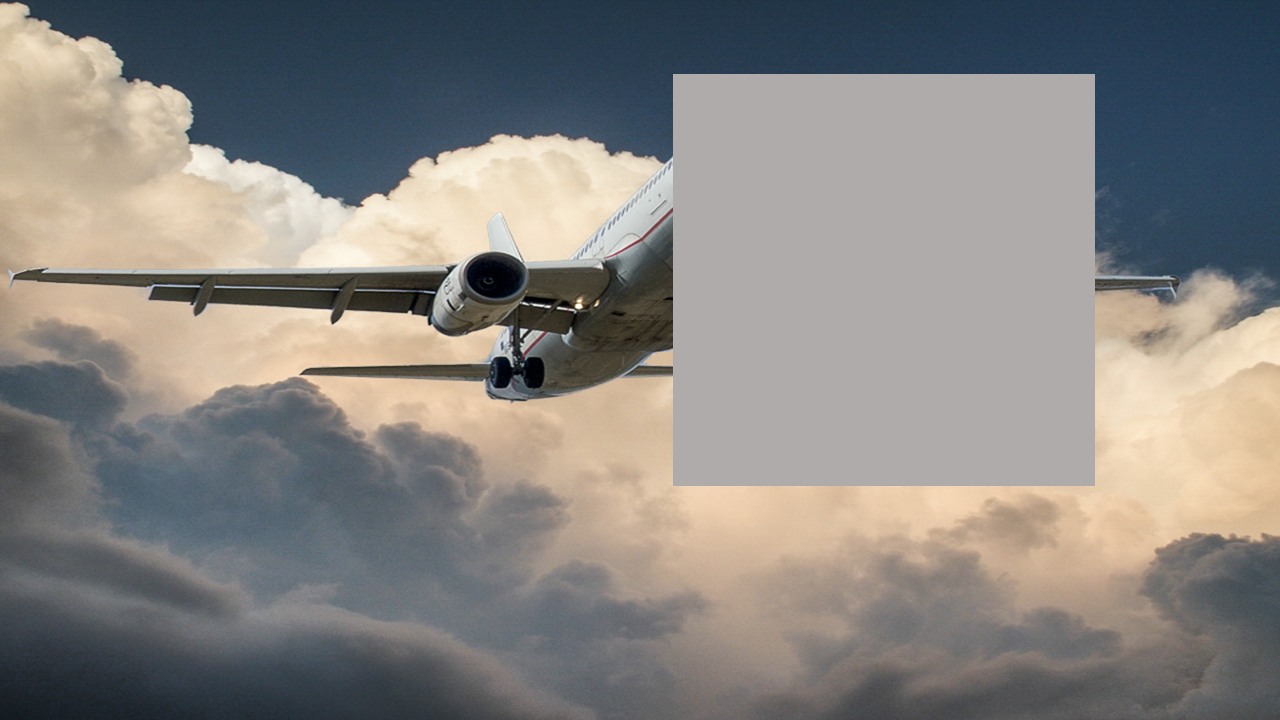}& \includegraphics[width=4cm, height=4cm]{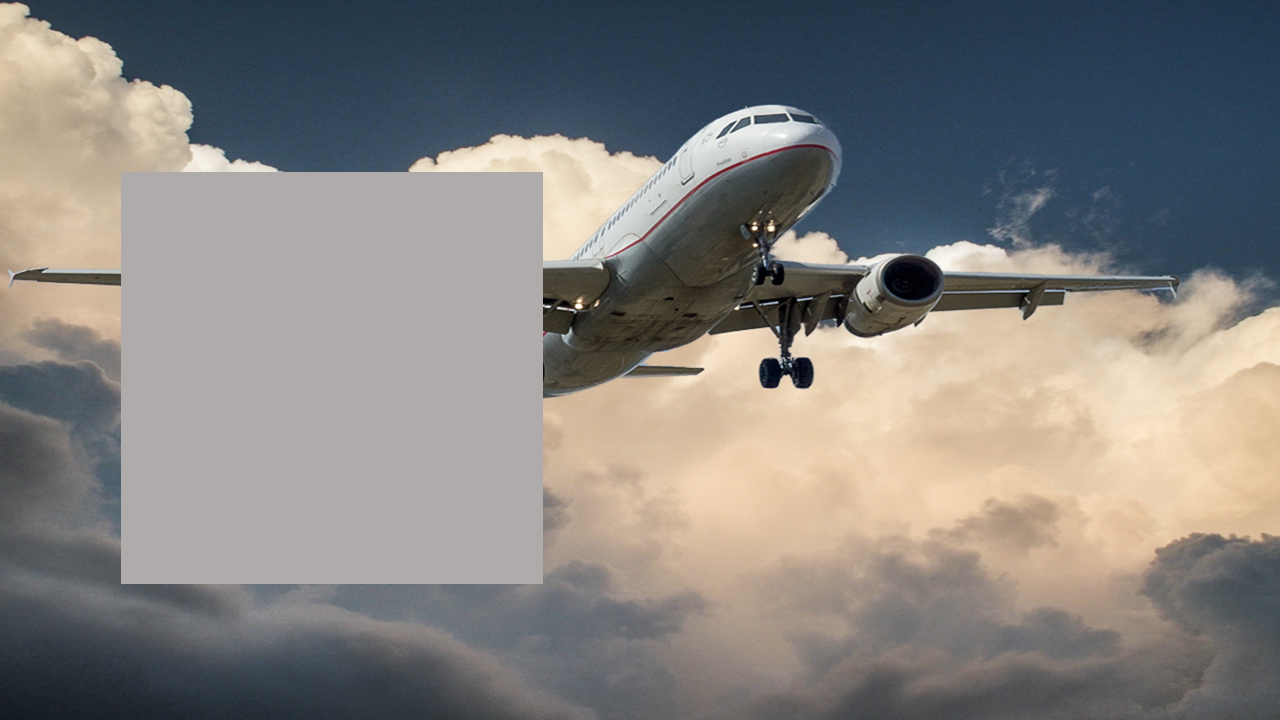}&\includegraphics[width=4cm, height=4cm]{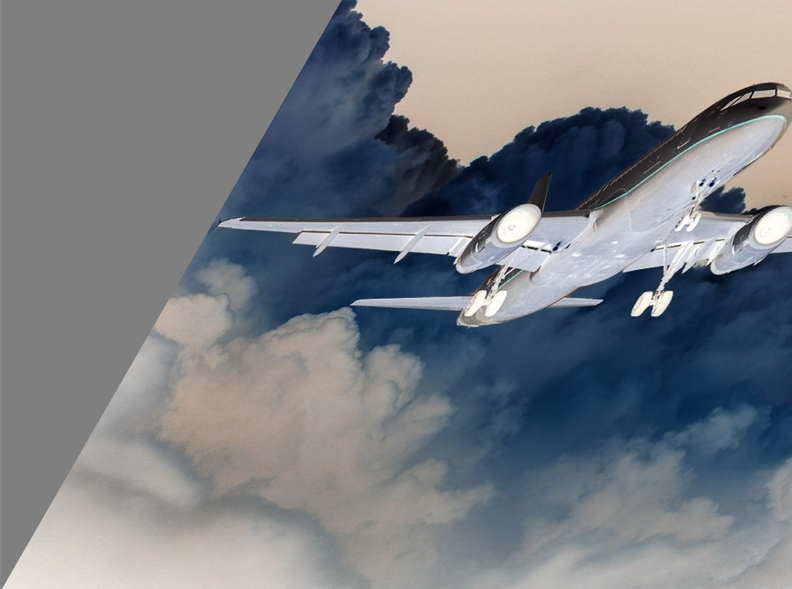}& \includegraphics[width=4cm, height=4cm]{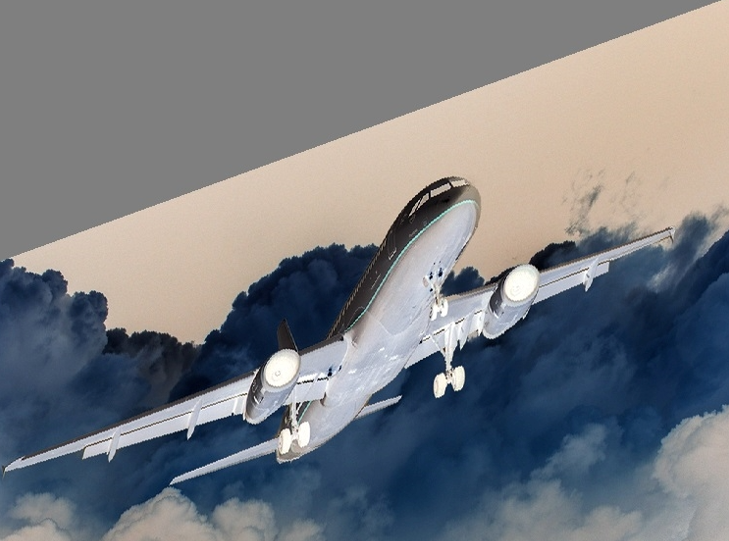} \\\
 \hspace{-0.2 in}    \footnotesize{(e) Cutout 1} &\footnotesize{(f) Cutout 2}&\footnotesize{(g) RandomAugment 1} &\footnotesize{(h) RandomAugment 2}
		\end{tabular}
	\end{center}
 \vspace{-0.05 in}
	\caption{Example images augmented using each data augmentation technique}
	\label{fig:augmentations}
\vspace{-0.05 in}
\end{figure*}
CDMAD uses data augmentation techniques utilized in FixMatch, ReMixMatch, and previous CISSL algorithms. Specifically, CDMAD uses random horizontal flipping and random cropping as weak data augmentation techniques and  uses Cutout \citep{devries2017improved} and RandomAugment \citep{cubuk2020randaugment} as strong data augmantation techniques. Random horizontal flipping and cropping flips and crops images, respectively. We implemented these weak data augmentation techniques using torchvision.transforms library. Cutout randomly masks out the square region of the image during training, which prevents the network from focusing on non-general features. The purpose of RandomAugment is to teach the network invariances. RandomAugment is a data augmentation technique that automatically searches for improved augmentation policies, where the search space of the policy consists of many sub-policies, one of which is randomly chosen for each data point at each iteration. A sub-policy is composed of basic data-augmentation techniques, such as shearing, rotation, and translation. We implemented Cutout and RandomAugment using the code from https://github.com/ildoonet/pytorch-randaugment. Example images augmented using each data augmentation technique are presented in \cref{fig:augmentations}.

\section{Training losses of FixMatch \citep{sohn2020fixmatch} and ReMixMatch \citep{berthelot2019remixmatch}}
\label{lossfixremix} 

Training losses of FixMatch \citep{sohn2020fixmatch} and ReMixMatch \citep{berthelot2019remixmatch} on a minibatch for labeled set $\mathcal{MX}$ and a minibatch for unlabeled set $\mathcal{MU}$ can be expressed as follows:
\begin{equation}
\begin{aligned}
\label{eqfixx}	loss_{F}\left(\mathcal{MX},\mathcal{MU},\hat{q},\tau;\theta\right)=Con(\mathcal{MU},\hat{q},\tau;\theta)\\+Sup(\mathcal{MX};\theta),
\end{aligned}
\end{equation}
\begin{equation}
\begin{aligned}
\label{eqremixx}
    loss_{R}\left(\mathcal{MX},\mathcal{MU},\Bar{q};\theta\right)=Mix(\mathcal{MX},\mathcal{MU},\bar{q};\theta)\\+Con(\mathcal{MU},\bar{q};\theta)+Rot(\mathcal{MU},r;\theta),
\end{aligned}
\end{equation}

where $\hat{q}$ and $\bar{q}$ denote the concatenations of $\hat{q_b}$ and $\bar{q_b}$, $b=1,\ldots,\mu B$, 
respectively, $Con(\mathcal{MU},\hat{q},\tau;\theta)$ and $Con(\mathcal{MU},\bar{q};\theta)$ denote the consistency regularization loss with and without the confidence threshold $\tau$, respectively, $Sup(\mathcal{MX};\theta)$ denotes the supervised loss for weakly augmented labeled data points, $Mix(\mathcal{MX},\mathcal{MU},\bar{q};\theta)$ denotes the mix-up regularization loss, and $Rot(\mathcal{MU},r;\theta)$ denotes the rotation loss with the rotated degree $r$. 

Each loss term in Eq (1) and (2) of the main paper is detailed as follows:
\begin{equation}
\begin{aligned}
\label{eqcon}
 Con(\mathcal{MU},\hat{q},\tau;\theta)=\\ \frac{1}{\mu B}\sum_{u^m_b\in\mathcal{MU}}{\mathbf{I}( \max(\hat{q_b})\ge\tau)}\mathbf{H}(P_{\theta}(y|\mathcal{A}(u^m_{b})),\hat{q_{b}}),
\end{aligned}
\end{equation}
\begin{equation}
\begin{aligned}
\label{eqcon2}
 Con(\mathcal{MU},\bar{q};\theta)= \frac{1}{\mu B}\sum_{u^m_b\in\mathcal{MU}}\mathbf{H}(P_{\theta}(y|\mathcal{A}(u^m_{b})),\bar{q_{b}}),
 \end{aligned}
 \end{equation}
\begin{equation}
\label{eqsub}
Sup(\mathcal{MX};\theta)=\frac{1}{B}\sum_{x^m_b\in\mathcal{MX}}\mathbf{H}(P_{\theta}(y|\alpha(x^m_{b})),p^m_b),
\end{equation}
\begin{equation}
\begin{aligned}
\label{eqmix}
Mix(\mathcal{MX},\mathcal{MU},\bar{q};\theta)= \frac{1}{B}\sum_{mx^m_b\in\mathcal{MX^\prime}}\mathbf{H}(P_{\theta}(y|mx^m_{b}),mp^m_b)+\\ \frac{1}{\mu B}\sum_{mu^m_b\in\mathcal{MU^\prime}}\mathbf{H}(P_{\theta}(y|mu^m_{b}),\bar{mq_{b}}),
\end{aligned}
\end{equation}
\begin{equation}
\label{eqrot}
Rot(\mathcal{MU},r;\theta)=\frac{1}{\mu B}\sum_{u^m_b\in\mathcal{MU}}\mathbf{H}(P_{\theta^\prime}(d|\mathcal{R}(u^m_{b},r)),r), 
\end{equation}
where $\mathbf{H}(\cdot,\cdot)$ denotes the cross-entropy loss, $p^m_b$ is one-hot encoded $y_b^m$, $\mathcal{MX^\prime}$ and $\mathcal{MU^\prime}$ are generated by mixup operation with strongly augmented $\mathcal{MX}$ and $\mathcal{MU}$, respectively, $mx^m_b$ denotes a mixed-labeled image, $mp^m_b$ denotes a mixed label, $mu^m_b$ denotes a mixed-unlabeled image, $\bar{mq_b}$ denotes a  mixed pseudo-label, $\mathcal{R}(u^m_{b},r)$ denotes the rotated $u^m_{b}$ with degree $r$, and $P_{\theta^\prime}(\hat{r}|\mathcal{R}(u^m_{b},r))$ denotes the prediction of rotated degree $r$ using network parameters $\theta^\prime$ that mostly overlap with $\theta$.

\section{Illustration of refining biased class predictions on test samples using CDMAD}
\label{testphasestructure}
\vspace{-0 in}
\begin{figure}[htbp]
	\begin{center}
 \vspace{0 in}
    \hspace{0.1 in}    \includegraphics[width=3.3in,height=2.28in]{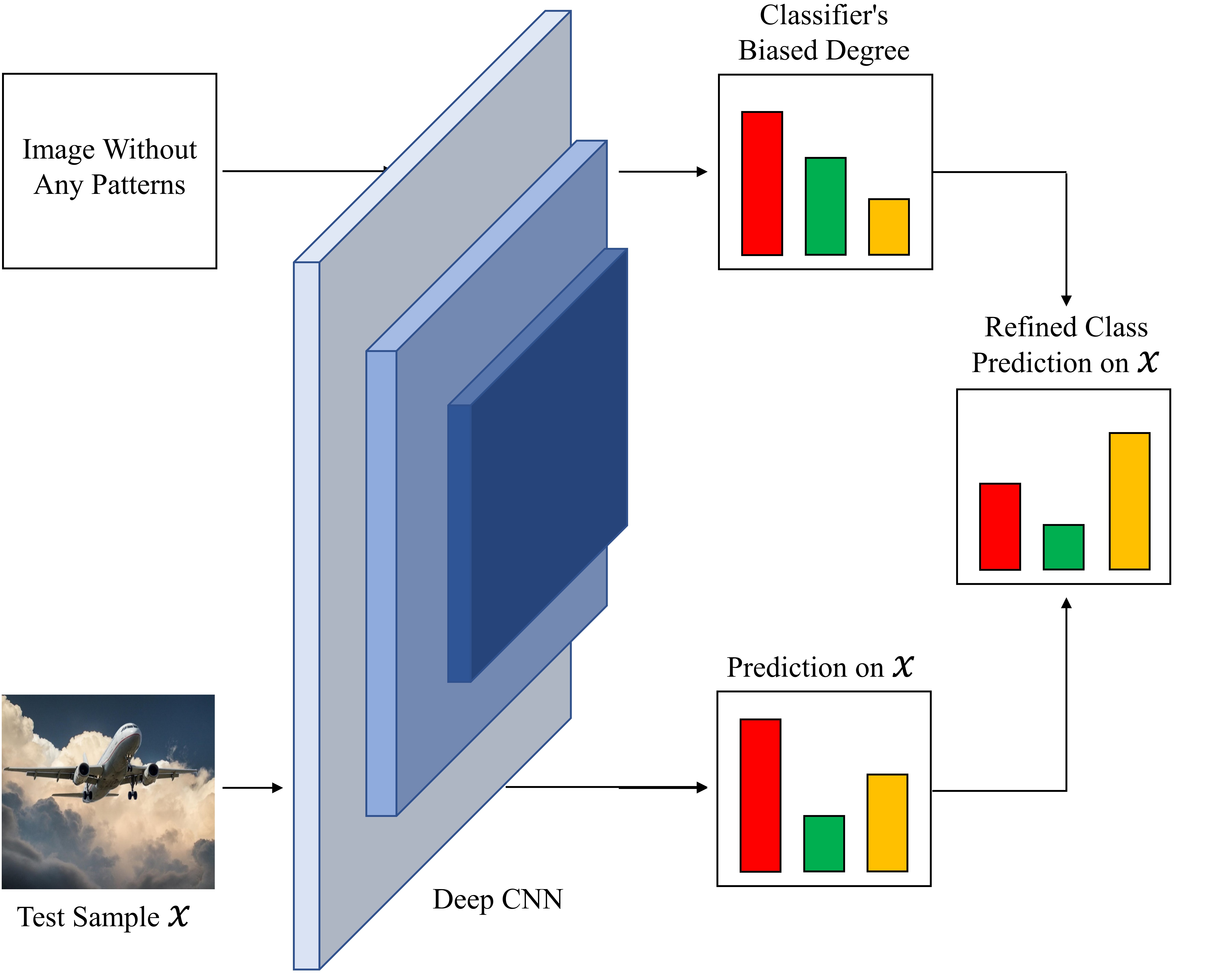}
	\end{center}
 \vspace{-0.15 in}
	\caption{
 Refinement of biased class predictions on test samples using CDMAD}
	\label{fig:architecturetest}
\vspace{-0.15 in}
\end{figure}
\cref{fig:architecturetest} presents refinement process of the biased class predictions on test samples using the CDMAD.
\vspace{-0 in}

\section{Pseudo code of the proposed algorithm}
\label{pseudocode}
The pseudo code that describes both training and test phases of the proposed algorithm is presented in \cref{alg}.
\begin{algorithm*}[htbp]
   \caption{Pseudo code of the proposed algorithm}
   \label{alg}
\begin{algorithmic}
   \STATE {\bfseries Input:} Labeled set $\mathcal{X}$, unlabeled set $\mathcal{U}$, test set $\mathcal{X}^{test}$, network parameters $\theta$
   \STATE {\bfseries Output:} Refined class predictions on test samples $f^*_{\theta}\left(x^{test}_{k}\right)$ for $k=1,\ldots,K$
   \WHILE{training}
    \STATE Generate minibatches $\mathcal{MX} = \left\lbrace\left(x^m_{b},y^m_{b}\right):  b\in \left(1,\ldots, B\right)\right\rbrace \subset \mathcal{X}$ and $\mathcal{MU}= \left\lbrace\left(u^m_{b}\right):  b\in\left(1,\ldots, \mu B\right)\right\rbrace \subset \mathcal{U}$
    \STATE Produce logits for a solid color image $g_{\theta}\left(\mathcal{I}\right)$
    \STATE Produce logits for weakly augmented unlabeled samples $g_{\theta}\left(\alpha\left(u^m_{b}\right)\right)$ for $b=1,\ldots,\mu B$
    \STATE Obtain refined logits $g^*_{\theta}\left(\alpha\left(u^m_{b}\right)\right)=g_{\theta}\left(\alpha\left(u^m_{b}\right)\right)-g_{\theta}\left(\mathcal{I}\right)$ for $b=1,\ldots,\mu B$
    \STATE Obtain refined pseudo-labels $q_b^*=\phi\left(g^*_{\theta}\left(\alpha\left(u^m_{b}\right)\right)\right)$ for $b=1,\ldots,\mu B$
    \IF{Base SSL==`FixMatch'}
   \STATE $loss_{F}^*=loss_{F}\left(\mathcal{MX},\mathcal{MU},q^*,0;\theta\right)$
   \STATE $\Delta\boldsymbol{\theta}\propto\nabla_{\boldsymbol{{\theta}}}loss_{F}^*$, $\quad\boldsymbol{\theta}\gets\boldsymbol{\theta}+\Delta\boldsymbol{\theta}$
   \ENDIF
   \IF{Base SSL==`ReMixMatch'}
   \STATE Produce class probabilities on wealy augmented labeled samples $P_{\theta}\left(y|\alpha\left(x^m_{b}\right)\right)$ for $b=1,\ldots, B$
   \STATE $CEloss=CrossEntropy\left(p^m_b,P_{\theta}\left(y|\alpha\left(x^m_b\right)\right)\right)$
   \STATE $loss_{R}^*=loss_{R}\left(\mathcal{MX},\mathcal{MU},q^*;\theta\right)+CEloss$
   \STATE $\Delta\boldsymbol{\theta}\propto\nabla_{\boldsymbol{{\theta}}}loss_{R}^*$, $\quad\boldsymbol{\theta}\gets\boldsymbol{\theta}+\Delta\boldsymbol{\theta}$
   \ENDIF   
   \ENDWHILE
   \STATE Produce logits for a solid color image $g_{\theta}\left(\mathcal{I}\right)$
   \STATE Produce logits for test samples $g_{\theta}\left(x^{test}_{k}\right)$ for $k=1,\ldots,K$
   \STATE Obtain refined logits $g^*_{\theta}\left(x^{test}_{k}\right)=g_{\theta}\left(x^{test}_{k}\right)-g_{\theta}\left(\mathcal{I}\right)$ for $k=1,\ldots,K$
   \STATE Obtain refined class predictions $f^*_{\theta}\left(x^{test}_{k}\right)=\argmaxG_cg^*_{\theta}\left(x^{test}_{k}\right)_c$ for $k=1,\ldots,K$
\end{algorithmic}
\end{algorithm*}

\section{Performance measures}
\label{measures} 
Following previous CISSL studies, we used balanced accuracy (bACC) \citep{huang2016learning}, geometric mean (GM) \citep{kubat1997addressing} as performance measures for the experiments in Section 4.2. Each performance measure is detailed as follows. \textbf{Balanced accuracy (bACC)} is the average of per-class accuracy. When the test set is class-balanced, bACC equals to the overall test accuracy. bACC is also referred to as the  averaged class recall in previous CISSL studies \citep{wei2021crest} and \citep{fan2022CoSSL}. \textbf{Geometric mean (GM)} is obtained by multiplying the $C$th root of per-class accuracy, where $C$ denotes the number of classes. GM equals to the overall test accuracy when all classes have the same per-class accuracy. 
\section{Further details about datasets and experimental setup}
\label{furtherdetail}

\textbf{CIFAR-10-LT and CIFAR-100-LT} are long-tailed datasets artificially generated from CIFAR-10 and CIFAR-100 \citep{krizhevsky2009learning}, respectively, 
with $N_{k}=N_{1}\times\left(N_{C}/N_{1}\right)^{\frac{k-1}{C-1}}$ and $M_{k}=M_{1}\times\left(M_{C}/M_{1}\right)^{\frac{k-1}{C-1}}$.
For CIFAR-10-LT, we assumed that $\gamma_u$ is known and equal to $\gamma_l$ while varying both $\gamma_l$ and $\gamma_u$ as $50$, $100$ and $150$. We then assumed that $\gamma_u$ is unknown and different from $\gamma_l$ while setting $\gamma_l$ to $100$ and varying $\gamma_u$ as $1$, $50$ and $150$. 
We set $N_1=1500$ and $M_1=3000$. 
For CIFAR-100-LT, we assumed that $\gamma_u$ is known and equal to $\gamma_l$ while varying both $\gamma_l$ and $\gamma_u$ as $20$, $50$ and $100$. We set $N_1=150$ and 
\textbf{STL-10-LT} is a long-tailed dataset created 
from STL-10 \citep{coates2011analysis}, where the number of labeled samples exponentially decreases from $N_{1}$ to $N_{C}$. 
We conducted experiments with unknown $\gamma_u$ while varying $\gamma_l$ as $10$ and $20$. We set $N_1$ to $450$ and used all $100,000$ unlabeled samples.
\textbf{Small-ImageNet-127} is a down-sampled version of ImageNet-127 \citep{huh2016makes}, created by grouping ImageNet \citep{russakovsky2015imagenet} into 127 classes based on WordNet hierarchy. The training set of ImageNet-127 consists of a total of 1,281,167 images and is imbalanced with the class imbalanced ratio of 286. \citet{fan2022CoSSL} created two versions of this dataset by down-sampling the images to 32$\times$32 and 64$\times$64, and randomly selected 10\% of the training samples of each class as a labeled set and used the remaining as an unlabeled set. We conducted experiments on both versions under the assumption that $\gamma_u$ is known and equal to $\gamma_l$. Similar to \citet{wei2021crest,fan2022CoSSL}, we conducted experiments using only FixMatch because of an excessive training cost. 
The test set of Small-ImageNet-127 is also class-imbalanced.

We used the Adam optimizer \citep{DBLP:journals/corr/KingmaB14}. 
We used the exponential moving average (EMA) of the network parameters for each iteration 
to evaluate the classification performance. We used Wide ResNet-28-2 \citep{zagoruyko2016wide} as a deep CNN for CIFAR-10-LT, CIFAR-100-LT, and STL-10-LT, and ResNet-50 \citep{he2016deep} for Small-ImageNet-127. 

For the experiments using FixMatch, we set the minibatch size $B$ to 32, relative size of the unlabeled to labeled minibatches $\mu$ to 2, and learning rate of the optimizer to $1.5*10^{-3}$. We trained  FixMatch for 500 epochs, where 1 epoch= 500 iterations. For the experiments using ReMixMatch, we set the minibatch size $B$ to $64$, relative size of the unlabeled to labeled minibatches $\mu$ to 2, and learning rate of the optimizer to $2*10^{-3}$. We trained ReMixMatch for 300 epochs. For the experiments on CIFAR-100, we set the weight decay parameter of L2 regularization (for EMA parameters) to 0.08 because CIFAR-100 has significantly many classes compared to the total number of training samples. For the experiments on CIFAR-10, STL-10, and Small-ImageNet-127, we set the weight decay parameter of L2 regularization to 0.04 when the number of training samples is smaller than $3*10^4$, while we set it to 0.01 and 0.02 for FixMatch and ReMixMatch, respectively, when the number of training samples is larger than $3*10^4$, because L2 regularization becomes ineffective as the number of training samples increases. We confirmed that the training of the proposed algorithm took less time than the baseline CISSL algorithms. We used random cropping and horizontal flipping for weak data augmentation and Cutout \citep{devries2017improved} and RandomAugment \citep{cubuk2020randaugment} for strong data augmentation. These augmentation techniques are detailed in \cref{augment}. To use CDMAD after network parameters are stabilized, we trained naive ReMixMatch and FixMatch for first 100 epochs, and subsequently used CDMAD to refine pseudo-labels, similar to DARP \citep{NEURIPS2020_a7968b43}. We conducted experiments using the GPU server Nvidia Tesla-V100 and 3090ti and used the Python library PyTorch 1.11.0 and 1.12.1. Our experiment results can be reproduced using the code in the supplementary material. 

\section{Description of baseline algorithms}
\label{baseline}
The classification performance of the CDMAD was compared with those of the following algorithms: \textbf{1. vanilla algorithm} - Deep CNN trained with cross-entropy loss, \textbf{2. CIL algorithms} - Re-sampling \citep{japkowicz2000class}, LDAM-DRW \citep{NEURIPS2019_621461af}, and cRT \citep{kang2019decoupling}, \textbf{3. SSL algorithms} - FixMatch \citep{sohn2020fixmatch} and ReMixMatch \citep{berthelot2019remixmatch}, and \textbf{4. CISSL algorithms} - DARP, DARP+LA, DARP+cRT \citep{NEURIPS2020_a7968b43}, CReST, CReST+LA \citep{wei2021crest}, ABC \citep{lee2021abc}, CoSSL \citep{fan2022CoSSL}, DASO \cite{oh2022daso}, SAW, SAW+LA and SAW+cRT \citep{lai2022smoothed} combined with FixMatch and ReMixMatch. Adsh \cite{guo2022class}, DebiasPL \cite{wang2022debiased}, UDAL \cite{lazarow2023unifying} and L2AC \cite{wang2023imbalanced} combined with FixMatch. We report the performance of the baseline algorithms reported in Tables of \citet{lai2022smoothed} and \citet{fan2022CoSSL} when it is reproducible; the performance measured using the uploaded code was reported otherwise. 

\section{Further qualitative analysis}
\label{furtherqualitative} 
\subsection{Case of $\gamma_l=\gamma_u$}
In Table 1 of Section 4.2, CDMAD performed better than the baseline CISSL algorithms when the class distributions of the labeled and unlabeled sets are assumed to be the same. To verify whether the pseudo-labels and class predictions on test samples refined by CDMAD contributed to its superior performance, we conducted two types of comparison: 1) pseudo-labels refined by CDMAD vs. true labels of unlabeled samples, and 2) class predictions refined by CDMAD vs. true labels of test samples. These results are also compared to those from FixMatch and ReMixMatch. 

First, \cref{fig:confusion100100pseudo} compares the confusion matrices of pseudo-labels generated by (a) FixMatch, (b) FixMatch+CDMAD, (c) ReMixMatch, and (d) ReMixMatch+CDMAD trained on CIFAR-10-LT under $\gamma_l=100$ and $\gamma_u=100$. The value in the $i$th row and $j$th column represents the proportion of the $i$th class samples classified as the $j$th class. We can observe that the pseudo-labels of FixMatch and ReMixMatch are biased toward the majority classes. Specifically, the data points in the minority classes (e.g., classes 8 and 9) are often misclassified into the majority classes (e.g. classes 0 and 1). In contrast, \cref{fig:confusion100100pseudo} (b) and \cref{fig:confusion100100pseudo} (d) show that FixMatch+CDMAD and ReMixMatch+CDMAD made nearly balanced class predictions.

\begin{figure*}[htbp]

 \begin{center}
		\begin{tabular}{cccc}
    \includegraphics[width=3.8cm, height=3.8cm]{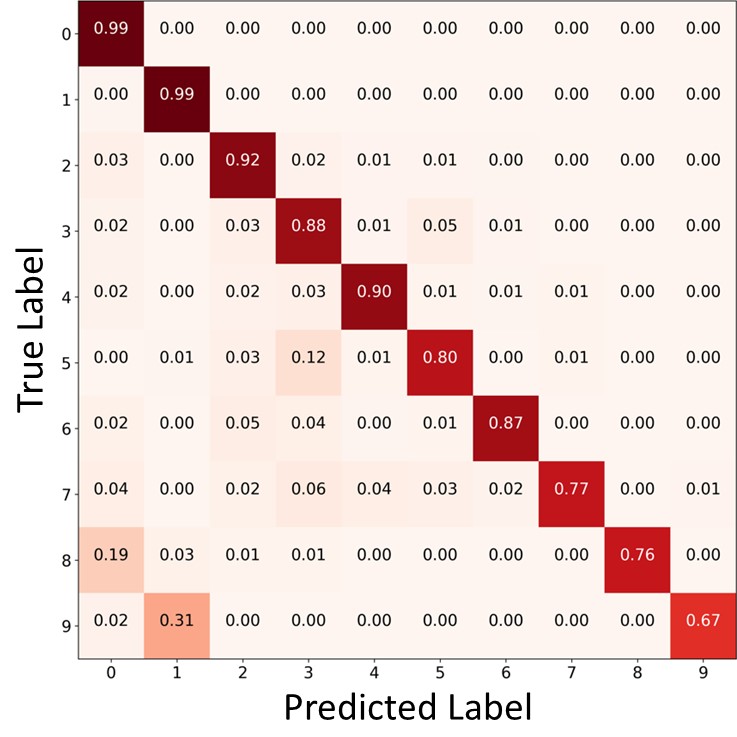}&\hspace{0.0cm} \includegraphics[width=3.8cm, height=3.8cm]{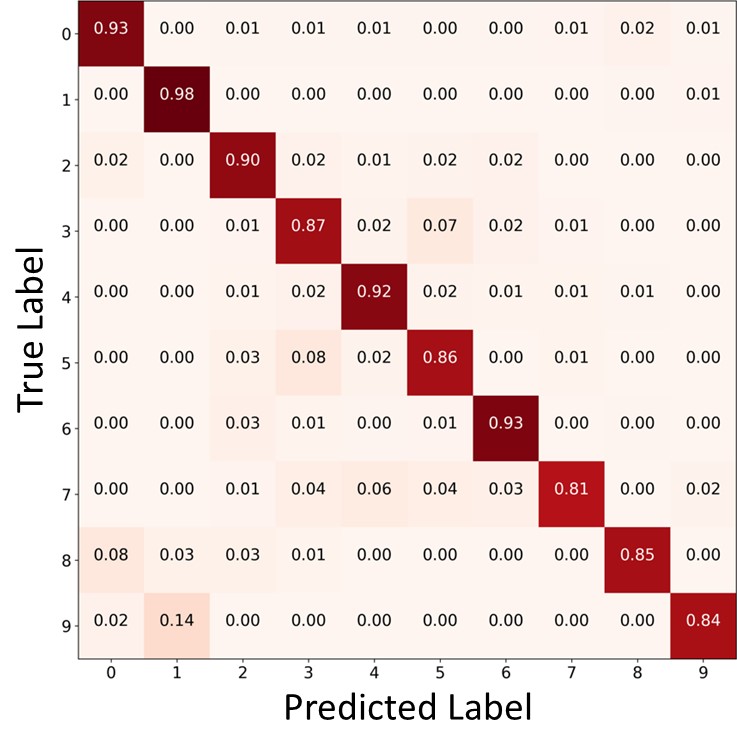}&\includegraphics[width=3.8cm, height=3.8cm]{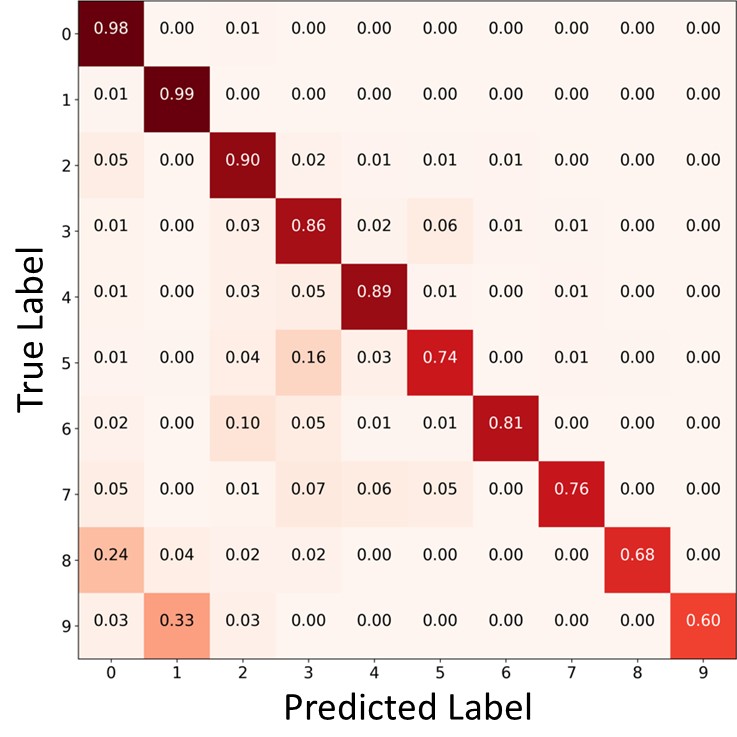}&\includegraphics[width=3.8cm, height=3.8cm]{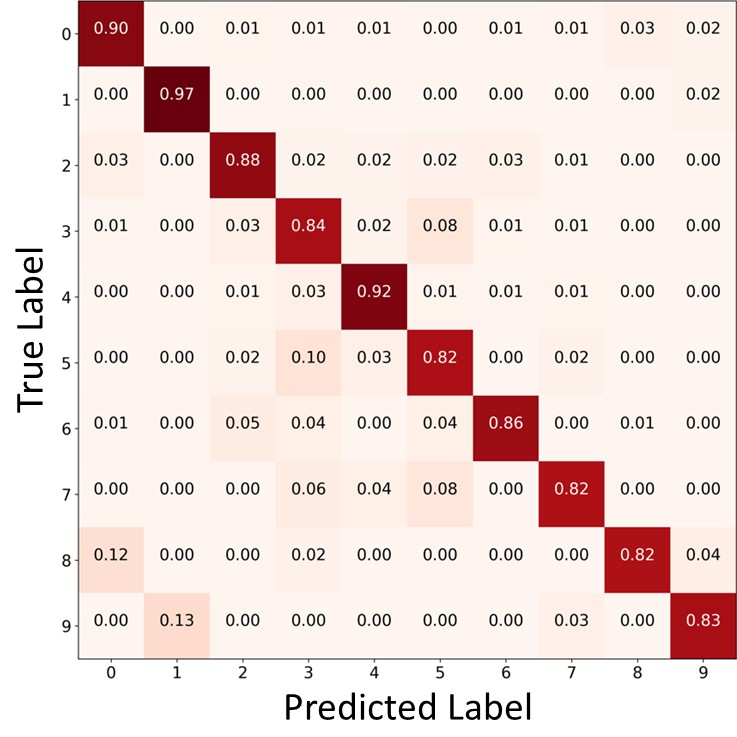} \\\
			 \footnotesize{(a) FixMatch} &\hspace{0.cm}
			\footnotesize{(b) FixMatch+CDMAD}& \footnotesize{(c) ReMixMatch}& \footnotesize{(d) ReMixMatch+CDMAD}
		\end{tabular}
	\end{center}

	\caption{Confusion matrices of pseudo-labels generated by (a) FixMatch, (b) FixMatch+CDMAD, (c) ReMixMatch, and (d) ReMixMatch+CDMAD trained on CIFAR-10-LT under $\gamma_l=100$ and $\gamma_u=100$.}
	\label{fig:confusion100100pseudo}

\end{figure*}
\begin{figure*}[htbp]
	
 \begin{center}
		\begin{tabular}{cccc}
    \includegraphics[width=3.8cm, height=3.8cm]{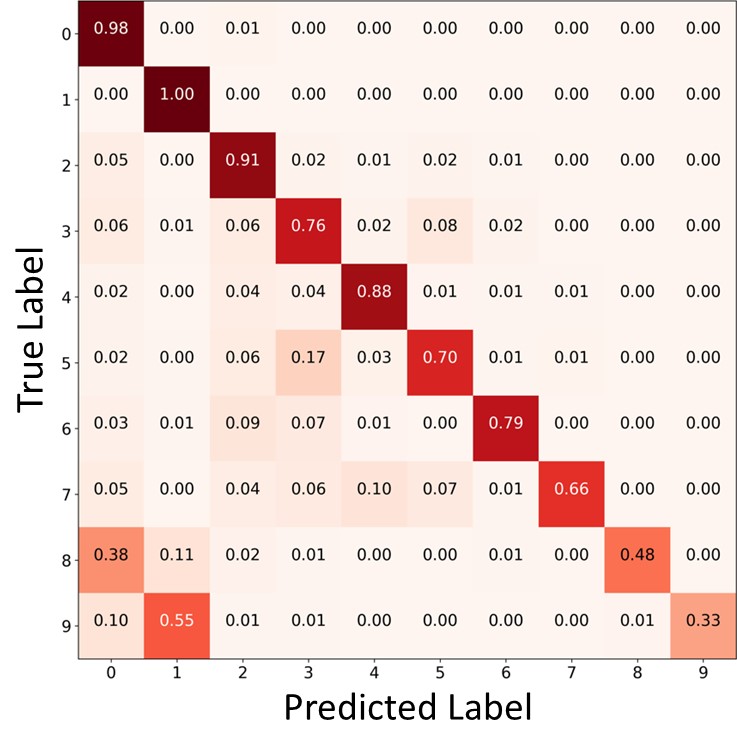}&\includegraphics[width=3.8cm, height=3.8cm]{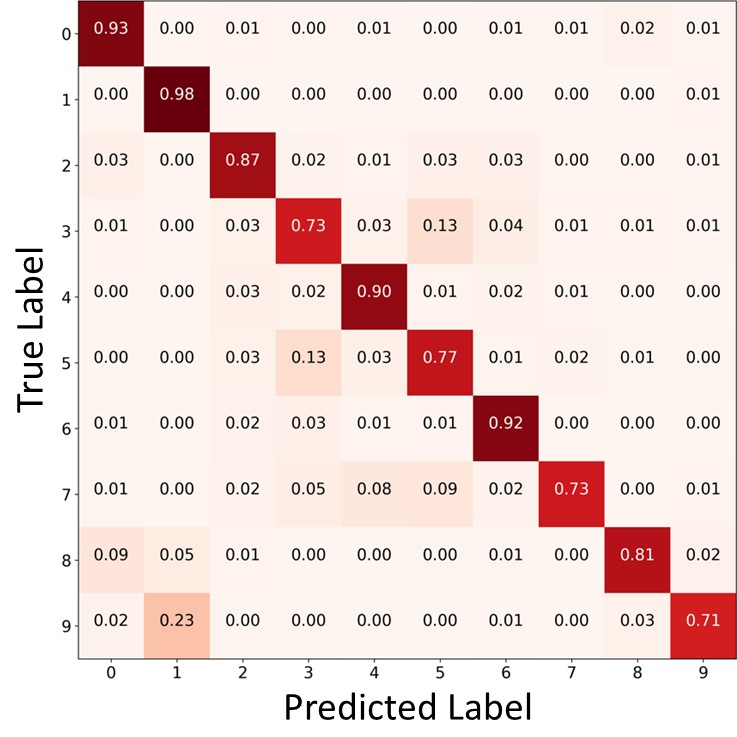}&\includegraphics[width=3.8cm, height=3.8cm]{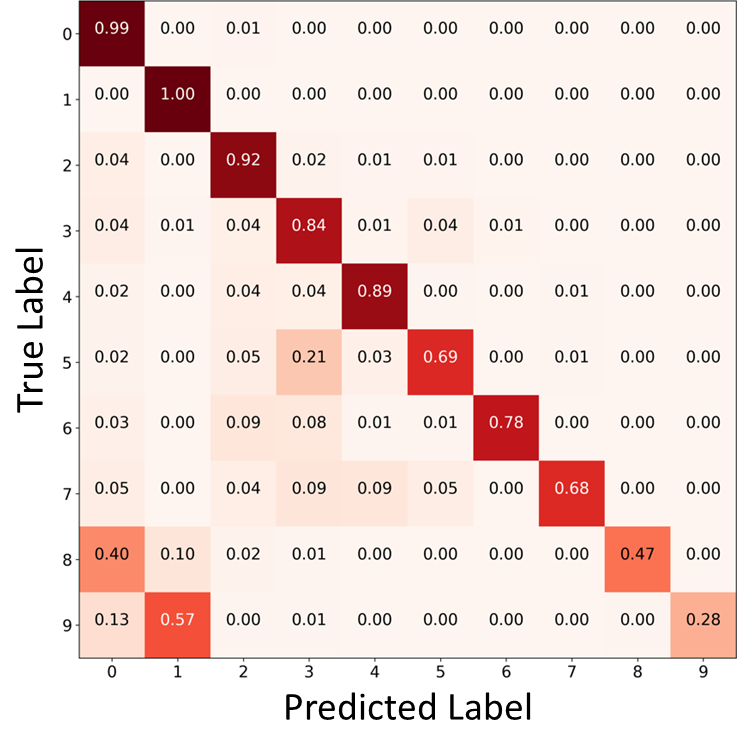}&\includegraphics[width=3.8cm, height=3.8cm]{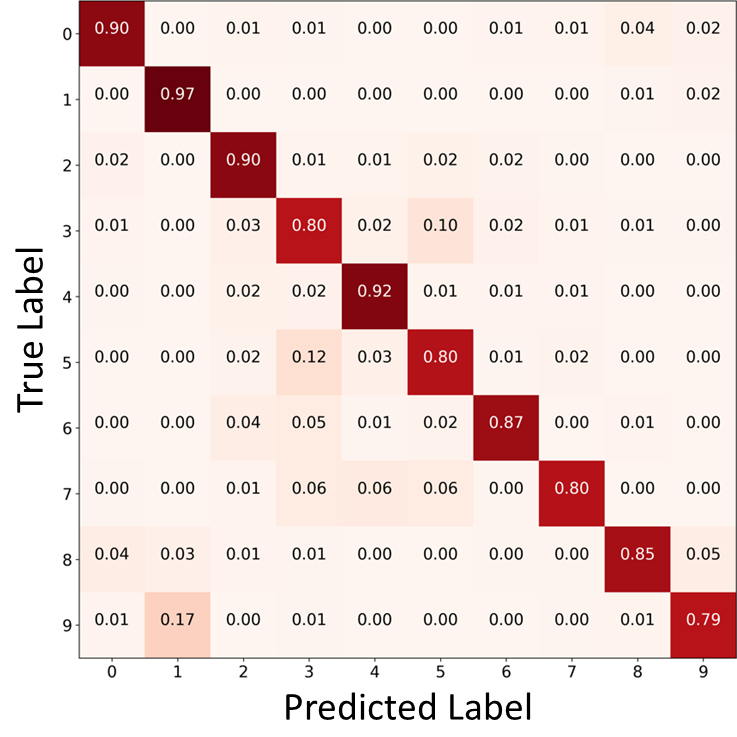} \\\
			  \footnotesize{(a) FixMatch} &
			\footnotesize{(b) FixMatch+CDMAD}&\footnotesize{(c) ReMixMatch} &
			\footnotesize{(d) ReMixMatch+CDMAD}
		\end{tabular}
	\end{center}

	\caption{Confusion matrices of the class predictions on the test set of CIFAR-10 using (a) FixMatch, (b) FixMatch+CDMAD, (c) ReMixMatch, and (d) ReMixMatch+CDMAD trained on CIFAR-10-LT under $\gamma_l=100$ and $\gamma_u=100$.}
	\label{fig:confusion100100}
\end{figure*}

Second, \cref{fig:confusion100100} compares the confusion matrices of the class predictions on the test set of CIFAR-10 using (a) FixMatch, (b) FixMatch+CDMAD, (c) ReMixMatch, and (d) ReMixMatch+CDMAD trained on  CIFAR-10-LT under $\gamma_l=100$ and $\gamma_u=100$. 
Similar to \cref{fig:confusion100100pseudo}, FixMatch+CDMAD and  ReMixMatch+CDMAD made more balanced predictions across classes.


\subsection{Case of $\gamma_l\neq\gamma_u$}
In Table 2 of Section 4.2, the proposed algorithm performed better than the baseline algorithms when the class distribution of the unlabeled set is assumed to be unknown and actually differs with that of the labeled set. To verify whether the pseudo-labels and class predictions refined by CDMAD contributed to its superior performance, we conducted three types of comparison: 1) pseudo-labels refined by CDMAD vs. true labels of unlabeled samples, 2) representations learned with unrefined pseudo-labels vs. representations learned with pseudo-labels refined by CDMAD, and 3) class predictions refined by CDMAD vs. true labels of test samples. These results are also compared to those from FixMatch and ReMixMatch.

First, \cref{fig:confusion1001pseudo} compares the confusion matrices of pseudo-labels generated by (a) FixMatch, (b) FixMatch+CDMAD, (c) ReMixMatch, and (d) ReMixMatch+CDMAD trained on CIFAR-10-LT under $\gamma_l=100$ and $\gamma_u=1$. The value in the $i$th row and $j$th column represents the proportion of the $i$th class samples classified as the $j$th class. We can observe that the pseudo-labels of FixMatch and ReMixMatch are biased toward the majority classes. Specifically, the data points in the minority classes (e.g., classes 7, 8 and 9) are often misclassified into the majority classes (e.g. classes 0 and 1). In contrast, \cref{fig:confusion1001pseudo} (b) and \cref{fig:confusion1001pseudo} (d) show that FixMatch+CDMAD and ReMixMatch+CDMAD made nearly balanced class predictions.

\begin{figure*}[htbp]

 \begin{center}
		\begin{tabular}{cccc}
    \includegraphics[width=3.8cm, height=3.8cm]{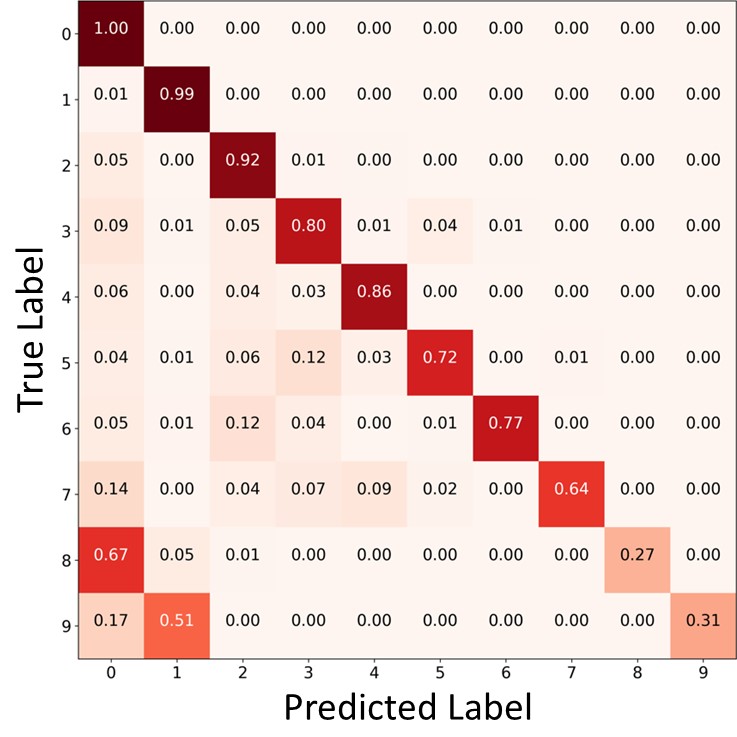}&\hspace{0.0cm} \includegraphics[width=3.8cm, height=3.8cm]{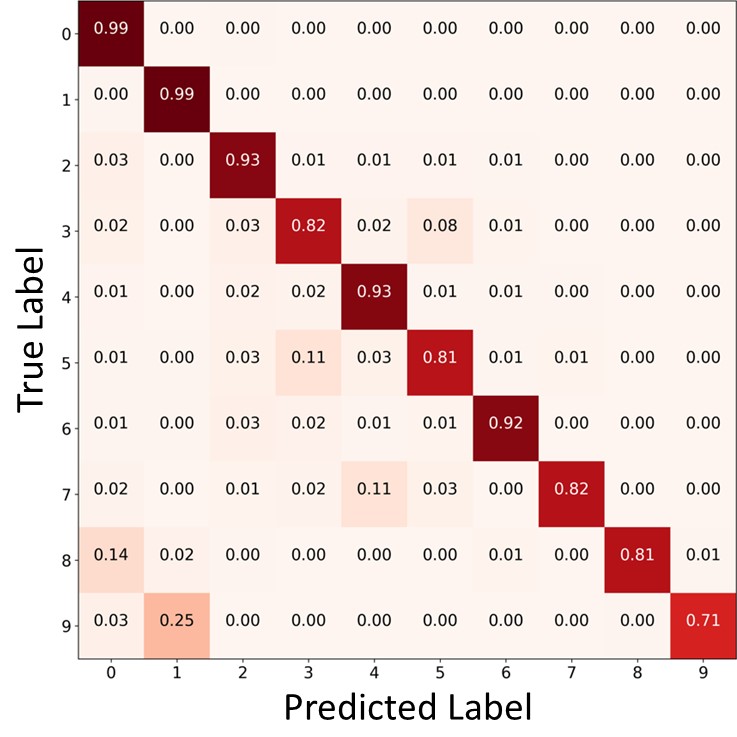}&\includegraphics[width=3.8cm, height=3.8cm]{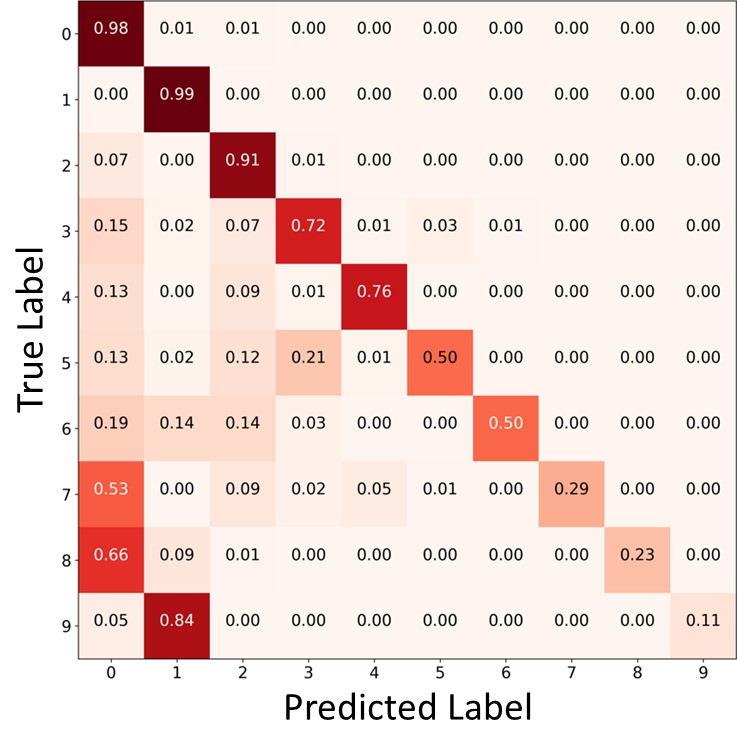}&\includegraphics[width=3.8cm, height=3.8cm]{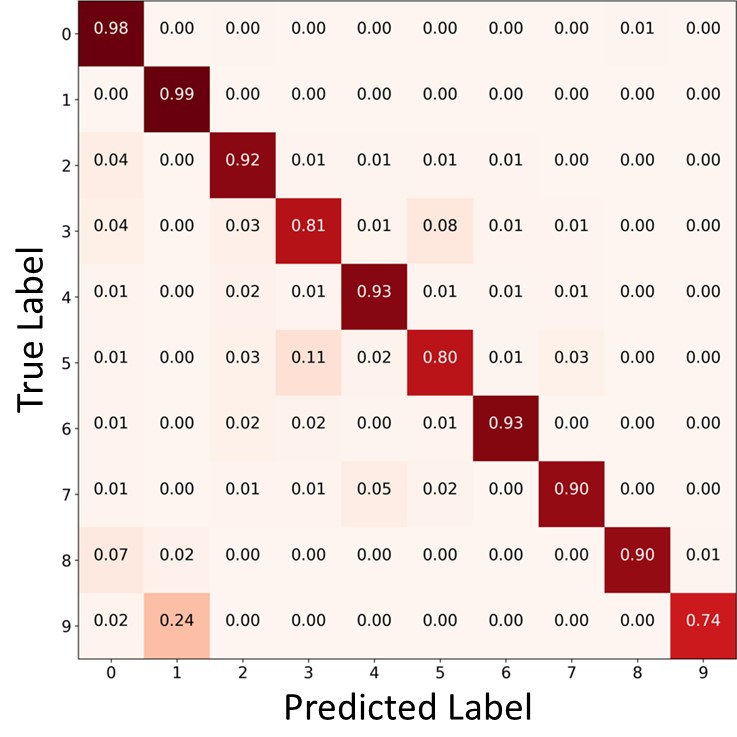} \\\
			 \footnotesize{(a) FixMatch} &\hspace{0.cm}
			\footnotesize{(b) FixMatch+CDMAD}& \footnotesize{(c) ReMixMatch}& \footnotesize{(d) ReMixMatch+CDMAD}
		\end{tabular}
	\end{center}

	\caption{Confusion matrices of pseudo-labels generated by (a) FixMatch, (b) FixMatch+CDMAD, (c) ReMixMatch, and (d) ReMixMatch+CDMAD trained on CIFAR-10-LT under $\gamma_l=100$ and $\gamma_u=1$.}
	\label{fig:confusion1001pseudo}

\end{figure*}

\begin{figure*}[htbp]

	\begin{center}
        \begin{tabular}{cccc}
			\includegraphics[width=3.8cm, height=3.8cm]{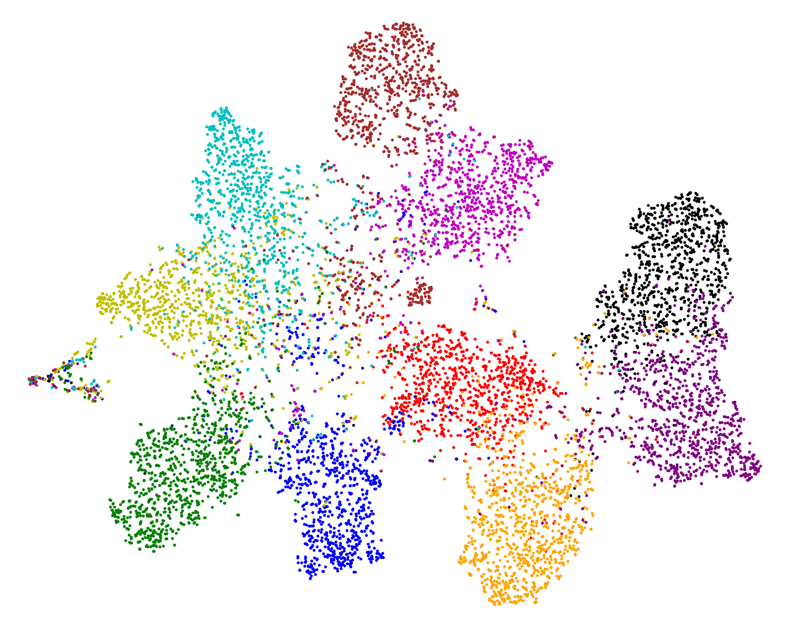}&\includegraphics[width=3.8cm, height=3.8cm]{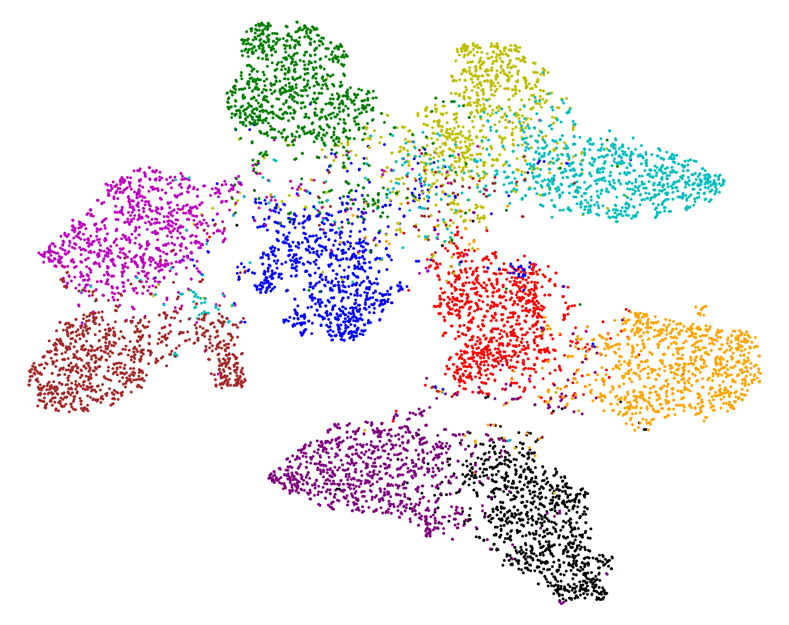}&\includegraphics[width=3.8cm, height=3.8cm]{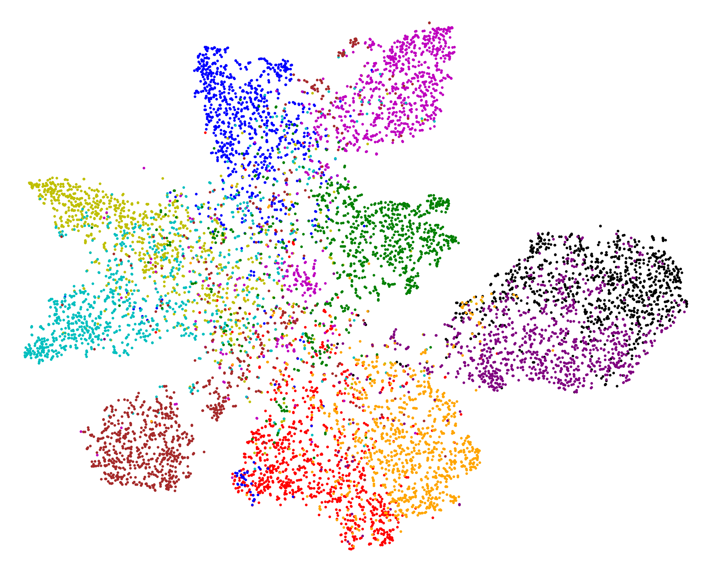}&\includegraphics[width=3.8cm, height=3.8cm]{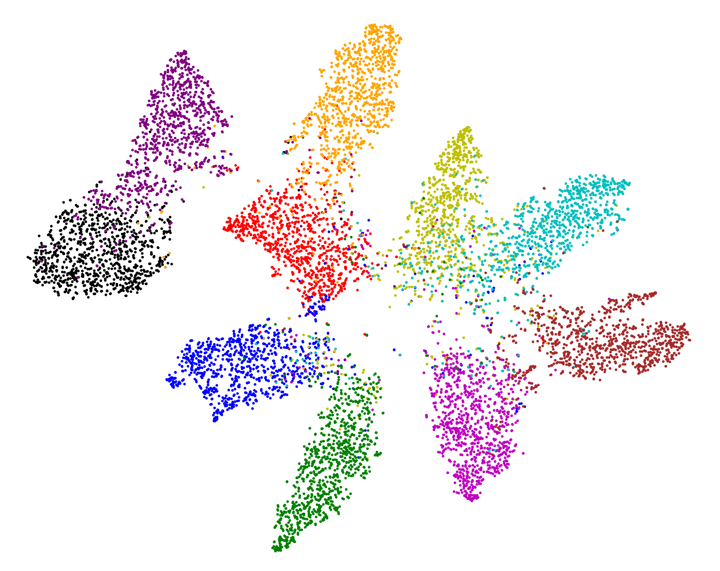} \\\
			   \footnotesize{(a) FixMatch} &\footnotesize{(b) FixMatch+CDMAD} &\footnotesize{(c) ReMixMatch} &\footnotesize{(d) ReMixMatch+CDMAD} 
		\end{tabular}
	\end{center}

	\caption{t-SNE of representations obtained for the test set of CIFAR-10 using (a) FixMatch, (b) FixMatch+CDMAD, (c) ReMixMatch, and (d) ReMixMatch+CDMAD trained on CIFAR-10-LT under $\gamma_l=100$ and  $\gamma_u=1$.}
	\label{tsne}

\end{figure*}

Second, \cref{tsne} compares t-distributed stochastic neighbor embedding (t-SNE) \citep{van2008visualizing} of representations obtained for the test set of CIFAR-10 using FixMatch, FixMatch+CDMAD, ReMixMatch, and ReMixMatch+CDMAD trained on CIFAR-10 with $\gamma_l=100$ and $\gamma_u=1$ (unknown $\gamma_u$), where different colors indicate different classes in CIFAR-10. We can observe that the representations obtained using FixMatch+CDMAD and ReMixMatch+CDMAD are separated into classes with clearer boundaries compared the those from FixMatch and ReMixMatchFrom in \cref{tsne} (a) and \cref{tsne} (c).
This is probably because CDMAD appropriately refined the biased pseudo-labels and used them for training, whereas FixMatch and ReMixMatch failed to learn the representations properly because they used the biased pseudo-labels for training. 
These results demonstrate that the quality of representations can be improved by using well refined pseudo-labels ( \cref{fig:confusion1001pseudo} (b) and \cref{fig:confusion1001pseudo} (d)) for training. 

Third, \cref{fig:confusion1001} compares the confusion matrices of the class predictions on the test set of CIFAR-10 using (a) FixMatch and (b) FixMatch+CDMAD trained on CIFAR-10-LT under $\gamma_l=100$ and $\gamma_u=1$. 
Similar to \cref{fig:confusion1001pseudo}, FixMatch+CDMAD made more balanced predictions across classes compared to the other algorithms. (Note that the results using ReMixMatch and ReMixMatch+CDMAD are presented in Section 4.3.)

 \begin{figure}[htbp]

 \begin{center}
		\begin{tabular}{cc}
   \includegraphics[width=3.8cm, height=3.8cm]{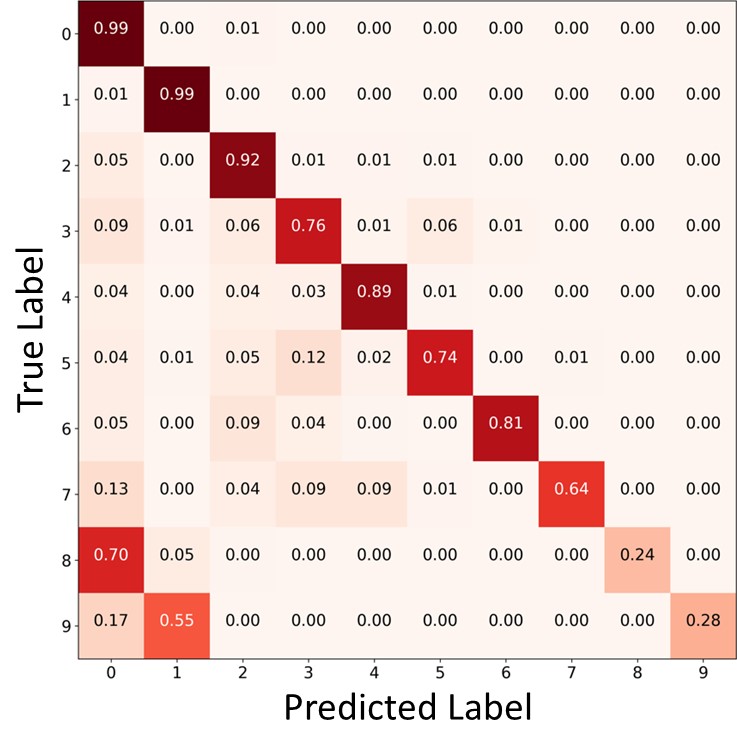}&\includegraphics[width=3.8cm, height=3.8cm]{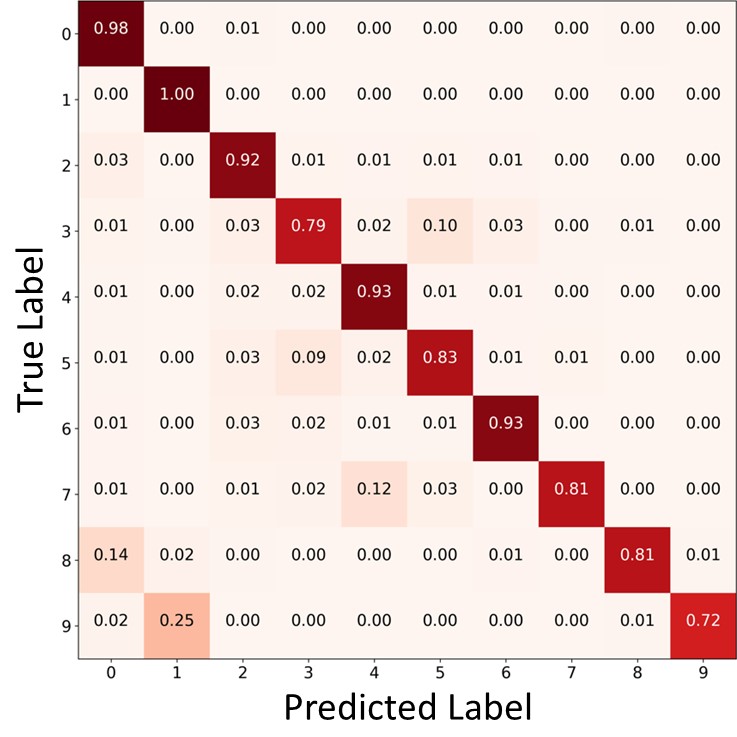} \\\
			 \footnotesize{(a) FixMatch} &\hspace{0.cm}
			\footnotesize{(b) FixMatch+CDMAD}
		\end{tabular}
	\end{center}

	\caption{Confusion matrices of the class predictions on the test set of CIFAR-10 using (a) FixMatch and (b) FixMatch+CDMAD trained on   CIFAR-10-LT under $\gamma_l=100$ and $\gamma_u=1$.}
	\label{fig:confusion1001}
\end{figure}

\section{Further comparison with LA}
\label{LACDMAD}

Because CDMAD can be viewed as an extension of LA for incorporating awareness of class distribution mismatch,
we compared the classification performance of LA and CDMAD for CISSL under the settings that the class distributions of the labeled and unlabeled sets mismatch.
To use LA for CISSL, we refined pseudo-labels and class predictions on test samples by LA similar to CDMAD. Experimental results are presented in \cref{tableLACDMAD}. ReMixMatch+LA adjusts the logits on inputs by the log of the class distribution of the labeled set by assuming that the class distribution of the unlabeled set is the same as that of the labeled set. ReMixMatch+LA* adjusts the logits on inputs by the log of the class distribution of the whole training set by assuming that the class distribution of the unlabeled set is known, although it differs from that of the labeled set. From \cref{tableLACDMAD}, we can observe that ReMixMatch+CDMAD performed significantly better than both ReMixMatch+LA and ReMixMatch+LA*. This may be because CDMAD refined the biased pseudo-labels and class predictions on test samples more effectively than ReMixMatch+LA and ReMixMatch+LA* by incorporating awareness of class distribution mismatch. It should be noted that LA* cannot re-balance the classifier to an appropriate degree even if the class distribution of the unlabeled set is known under the class distribution mismatch setting. This may be because in SSL, each labeled data point is typically used more frequently and importantly than each unlabeled data point. Consequently, the classifier may become biased towards the class distribution of the labeled set to a greater degree than the class distribution of the entire training set, while still being affected by the class distribution of the unlabeled set.

\begin{table}[htbp]

  \caption{bACC/GM on CIFAR-10-LT under $\gamma_l\neq\gamma_u$.}
  \label{tableLACDMAD}
  \centering
  \resizebox{3.2in}{!}{%
  \begin{tabular}{ccccc}
    \toprule
    &&\multicolumn{3}{c}{CIFAR-10-LT ($\gamma_l=100$)}                \\
    \midrule
    \multicolumn{2}{c}{Algorithm}   &$\gamma_u=1$&$\gamma_u=50$&$\gamma_u=150$\\
    \midrule

    \multicolumn{2}{c}{ReMixMatch+LA}&$76.6$/ $66.8$&$69.9$/ $52.6$&$70.5$/ $42.7$\\
    \multicolumn{2}{c}{ReMixMatch+LA*}&$69.2$/ $54.0$&$73.7$/ $70.8$&$58.3$/ $27.4$\\
    \rowcolor{Gray}
    \multicolumn{2}{c}{ReMixMatch+CDMAD}&\textbf{89.9}/ \textbf{89.6}&\textbf{86.9}/ \textbf{86.7}&\textbf{83.1}/ \textbf{82.7}\\
    \bottomrule
  \end{tabular}}
  \vspace{-0.1 in}
\end{table}

\section{Fine grained experimental results}
\label{finegrained}
To verify that CDMAD improves classification performance for minority classes, we performed experiments using FixMatch/ ReMixMatch and FixMatch/ReMixMatch+CDMAD on CIFAR-10-LT and measured the accuracy for Many/Medium/Few groups separately (for CIFAR-10-LT, we set the first three classes as many shot groups, then next four classes as medium shot groups, and then last three classes as few shot groups). We also measured the fine grained classification performance of FixMatch/ReMixMatch+CoSSL \citep{fan2022CoSSL} on CIFAR-10-LT and compared them with those of CDMAD for comparison with a recent CISSL algorithm. The results are summarized in \cref{finegrained1}, \cref{finegrained2}, and \cref{finegrained3}. We can observe that FixMatch+CDMAD and ReMixMatch+CDMAD greatly improved accuracy for few shot groups with only slightly decreased accuracy for many shot groups compared to FixMatch and ReMixMatch. We can also observe that FixMatch/ ReMixMatch+CDMAD achieved better medium and few shot classification accuracies than FixMatch/ ReMixMatch+COSSL. These results demonstrate that CDMAD effectively relieves class imbalance.

\begin{table}[htbp]
  \caption{Fine grained experimental results under $\gamma_l=\gamma_u=100$.
  }
  \label{finegrained1}
  \centering
  \resizebox{3.2in}{!}{%
  \begin{tabular}{cccccc}
    \toprule
    \multicolumn{6}{c}{CIFAR-10-LT ($\gamma_l=\gamma_u=100$)}                   \\
    \midrule
    \multicolumn{2}{c}{Algorithm}   &Overall&Many&Medium&Few\\
    \midrule
    \multicolumn{2}{c}{FixMatch}&$72.5$&$95.0$&$74.6$&$47.3$\\
    \multicolumn{2}{c}{FixMatch+CDMAD}&$83.6$&$91.9$&$82.2$&$77.2$\\
    \midrule
    \multicolumn{2}{c}{ReMixMatch}&$74.3$&$96.7$&$77.8$&$47.2$\\
    \multicolumn{2}{c}{ReMixMatch+CDMAD}&$85.5$&$90.1$&$84.8$&$81.8$\\
    \bottomrule
  \end{tabular}}
\end{table}
\begin{table}[htbp]
  \caption{Fine grained experimental results under $\gamma_l=100$, and $\gamma_u=1$.
  }
  \label{finegrained2}
  \centering
  \resizebox{3.2in}{!}{%
  \begin{tabular}{cccccc}
    \toprule
    \multicolumn{6}{c}{CIFAR-10-LT ($\gamma_l=100$, $\gamma_u=1$)}                   \\
    \midrule
    \multicolumn{2}{c}{Algorithm}   &Overall&Many&Medium&Few\\
    \midrule
    \multicolumn{2}{c}{FixMatch}&$70.2$&$96.3$&$77.7$&$34.0$\\
    \multicolumn{2}{c}{FixMatch+CDMAD}&$87.5$&$95.6$&$86.4$&$80.9$\\
    \midrule
    \multicolumn{2}{c}{ReMixMatch}&$65.4$&$96.6$&$70.8$&$27.0$\\
    \multicolumn{2}{c}{ReMixMatch+CDMAD}&$89.9$&$96.5$&$87.8$&$86.0$\\
    \bottomrule
  \end{tabular}}
\end{table}
\begin{table}[htbp]
\vspace{-0.1 in}
  \caption{Fine grained experimental results under $\gamma_l=\gamma_u=100$.
  }
  \label{finegrained3}
  \centering
  \resizebox{3.2in}{!}{%
  \begin{tabular}{cccccc}
    \toprule
    \multicolumn{6}{c}{CIFAR-10-LT ($\gamma_l=\gamma_u=100$)}                   \\
    \midrule
    \multicolumn{2}{c}{Algorithm}   &Overall&Many&Medium&Few\\
    \midrule
    \multicolumn{2}{c}{FixMatch+CoSSL}&$83.2$&$93.4$&$81.1$&$75.8$\\
    \multicolumn{2}{c}{FixMatch+CDMAD}&$83.6$&$91.9$&$82.2$&$77.2$\\
    \midrule
    \multicolumn{2}{c}{ReMixMatch+CoSSL}&$84.1$&$91.7$&$82.1$&$79.1$\\
    \multicolumn{2}{c}{ReMixMatch+CDMAD}&$85.5$&$90.1$&$84.8$&$81.8$\\
    \bottomrule
  \end{tabular}}
\end{table}

\section{Comparing CDMAD with DASO}
\label{furthercomparison}

\begin{table}[htbp]
  \caption{bACC/GM on CIFAR-10-LT under $\gamma=\gamma_{l}=\gamma_{u}$. 
  }
  \label{daso1}
  \centering
  \resizebox{3.2in}{!}{%
  \begin{tabular}{ccccc}
    \toprule
    \multicolumn{5}{c}{CIFAR-10-LT ($\gamma=\gamma_l=\gamma_u$)}                   \\
    \midrule
    \multicolumn{2}{c}{Algorithm}   &$\gamma=50$&$\gamma=100$&$\gamma=150$\\
    \midrule
    \multicolumn{2}{c}{FixMatch+DASO}&$81.8$/ $81.0$&$75.7$/ $74.0$&$72.0$/ $68.9$ \\
    \multicolumn{2}{c}{FixMatch+DASO+LA}&$84.1$/ $83.7$&$79.4$/ $78.8$&$76.5$/ $75.5$ \\
    \rowcolor{Gray}
    \multicolumn{2}{c}{FixMatch+CDMAD}&\textbf{87.3}/\textbf{87.0}&\textbf{83.6}/\textbf{83.1}&\textbf{80.8}/\textbf{79.9} \\
    \midrule
    \multicolumn{2}{c}{ReMixMatch+DASO}&$82.5$/ $81.9$&$76.0$/ $73.9$&$70.8$/ $66.5$ \\
    \multicolumn{2}{c}{ReMixMatch+DASO+LA}&$85.9$/ $85.7$&$82.8$/ $82.4$&$79.0$/ $78.4$ \\
    \rowcolor{Gray}
    \multicolumn{2}{c}{ReMixMatch+CDMAD}&\textbf{88.3}/ \textbf{88.1}&85.5/ \textbf{85.3}&\textbf{82.5}/ \textbf{82.0}\\
    \bottomrule
  \end{tabular}}
  \vspace{-0.1 in}
\end{table}

\begin{table*}[htbp]

  \caption{Comparison of bACC/GM on CIFAR-10-LT and STL-10-LT under $\gamma_l\neq\gamma_u$.}
  \label{daso2}
  \centering
  \resizebox{5in}{!}{%
  \begin{tabular}{ccccccc}
    \toprule
    &&\multicolumn{3}{c}{CIFAR-10-LT ($\gamma_l=100$)} &\multicolumn{2}{c}{STL-10-LT ($\gamma_u=$Unknown)}                  \\
    \midrule
    \multicolumn{2}{c}{Algorithm}   &$\gamma_u=1$&$\gamma_u=50$&$\gamma_u=150$&$\gamma_{l}=10$ &$\gamma_{l}=20$\\
    \midrule
    \multicolumn{2}{c}{FixMatch+DASO}&$86.4$/ $86.0$&$79.1$/ $78.2$&$74.2$/ $71.6$&$68.4$/ $65.3$&$62.1$/ $58.9$\\
    \multicolumn{2}{c}{FixMatch+DASO+LA}&$86.2$/ $85.8$&$81.7$/ $81.2$&$78.0$/ $77.0$&$68.9$/ $66.3$&$66.0$/ $64.6$\\
    
    \rowcolor{Gray}
    \multicolumn{2}{c}{FixMatch+CDMAD}&\textbf{87.5}/ \textbf{87.1}&\textbf{85.7}/ \textbf{85.3}&\textbf{82.3}/ \textbf{81.8}&\textbf{79.9}/ \textbf{78.9}&\textbf{75.2}/ \textbf{73.5}\\
    \midrule
    \multicolumn{2}{c}{ReMixMatch+DASO}&$89.6$/ $89.3$&$79.6$/ $77.8$&$72.3$/ $69.0$&$75.1$/ $73.6$&$66.8$/ $61.8$\\
    \multicolumn{2}{c}{ReMixMatch+DASO+LA}&$80.6$/ $77.7$&$84.8$/ $84.5$&$79.7$/ $79.2$&$78.1$/ $77.3$&$75.3$/ $74.0$\\
    
    \rowcolor{Gray}
    \multicolumn{2}{c}{ReMixMatch+CDMAD}&\textbf{89.9}/ \textbf{89.6}&\textbf{86.9}/ \textbf{86.7}&\textbf{83.1}/ \textbf{82.7}&\textbf{83.0}/\textbf{82.1}&\textbf{81.9}/\textbf{80.9}\\
    \bottomrule
  \end{tabular}}
  \vspace{-0.1 in}
\end{table*}

\begin{table}[htbp]
\vspace{-0.05 in}
  \caption{Comparison of bACC on CIFAR-100-LT. 
  }
  \label{daso3}
  \centering
  \resizebox{3.2in}{!}{%
  \begin{tabular}{ccccc}
    \toprule
    \multicolumn{5}{c}{CIFAR-100-LT ($\gamma=\gamma_l=\gamma_u$)}                   \\
    \midrule
    \multicolumn{2}{c}{Algorithm}   &$\gamma=20$&$\gamma=50$&$\gamma=100$\\
    \midrule
    \multicolumn{2}{c}{FixMatch+DASO}&$45.8$&$39.2$&$33.9$\\
    \multicolumn{2}{c}{FixMatch+DASO+LA}&$46.2$&$39.9$&$34.5$ \\
    \rowcolor{Gray}
    \multicolumn{2}{c}{FixMatch+CDMAD}&\textbf{54.3}&\textbf{48.8}&\textbf{44.1}\\
    \midrule
    \multicolumn{2}{c}{ReMixMatch+DASO}&$51.5$&$43.0$&$38.2$\\
    \multicolumn{2}{c}{ReMixMatch+DASO+LA}&$52.8$&$45.5$&$40.3$ \\
    
    \rowcolor{Gray}
    \multicolumn{2}{c}{ReMixMatch+CDMAD}&\textbf{57.0}&\textbf{51.1}&\textbf{44.9} \\
    \bottomrule
  \end{tabular}}
  \vspace{-0.05 in}
\end{table}

Because classification performance of DASO were measured under slightly different settings from ours, it was difficult to fairly compare their classification performance with that of CDMAD in the main paper. Nevertheless, in the case of DASO, we conducted experiments in the same setting as ours using the official code in github. The classification performance of DASO is summarized in \cref{daso1}, \cref{daso2}, and \cref{daso3}. From \cref{daso1}, \cref{daso2}, and \cref{daso3}, we can observe that the proposed algorithm outperforms DASO. From \cref{daso2}, we can also observe that combining DASO with LA degrades the classification performance when the class distributions of the labeled and unlabeled sets severely differ. This may be because the LA considers only the class distribution of the labeled set when the class distribution of the unlabeled set is unknown. These results show the importance of re-balancing the classifier by considering the class distribution of the unlabeled set. These results demonstrate the effectiveness of CDMAD.

\end{document}